\documentclass{amsart}
\usepackage{amsmath}%
\usepackage{amsthm}
\usepackage{amsxtra}%
\usepackage{amsfonts}%
\usepackage{amssymb}%
\usepackage[foot]{amsaddr}
\usepackage{bm}%
\usepackage{caption}
\usepackage{tablefootnote}
\usepackage{url}
\usepackage{tabularx}
\usepackage{url}
\usepackage{hyperref}
\usepackage{graphicx}
\usepackage{cite}
\hypersetup{colorlinks,breaklinks,
             linkcolor=blue,urlcolor=blue,
             anchorcolor=blue,citecolor=blue}
\usepackage{xcolor}
\usepackage{cleveref}
\usepackage{float}
\usepackage{microtype}
\usepackage{subfigure}
\usepackage{booktabs} 
\usepackage{makecell}
\usepackage{url}
\usepackage{color}
\usepackage{graphicx}
\usepackage{float}
\usepackage{multirow}
\usepackage{siunitx}
\usepackage[margin=1in]{geometry}

\sisetup{detect-weight,output-exponent-marker=\ensuremath{\mathrm{e}}}

\newcommand{\norm}[1]{\Vert#1\Vert}

\DeclareMathOperator{\grad}{\nabla}

\DeclareMathOperator{\Lp}{Lip}
\DeclareMathOperator*{\argmin}{{arg\,min}}

\newcommand{\bq}{\begin{equation}}
\newcommand{\eq}{\end{equation}}

\newcommand{\DD}{{\mathcal{D}}}
\newcommand{\D}{{\mathcal{D}}}
\newcommand{\LL}{L}
\newcommand{\R}{\mathbb{R}}
\newcommand{\F}{\mathcal{F}}
\newcommand{\EE}{\mathbb{E}}

\newcommand{\bO}{\mathcal{O}}

\newcommand{\loss}{{\ell}}

\newcommand{\E}{\mathbb{E}}
\newcommand{\eps}{\varepsilon}
\renewcommand{\P}{\mathbb{P}}

\newcommand{\M}{\mathcal{M}}

\newcommand{\Reg}{\mathcal R}


\theoremstyle{plain}
\newtheorem{theorem}{Theorem}[section]

\newtheorem{corollary}[theorem]{Corollary}
\newtheorem{lemma}[theorem]{Lemma}
\newtheorem{proposition}[theorem]{Proposition}

\theoremstyle{definition}
\newtheorem{definition}[theorem]{Definition}

\newtheorem{remark}[theorem]{Remark}

\numberwithin{equation}{section}



\title[Lipschitz regularized Deep Neural Networks]{Lipschitz regularized Deep Neural Networks generalize and are adversarially robust}
\author{Chris Finlay, Jeff Calder, Bilal Abbasi, \and Adam Oberman}

\address[Finlay, Abbasi, Oberman]{Department of Mathematics and Statistics, McGill University}
\address[Calder]{School of Mathematics, University of Minnesota}
\email{\{adam.oberman,christopher.finlay\}@mcgill.ca, jcalder@umn.edu, bilal.abbasi.ba@gmail.com}

\thanks{
Research supported by:  
AFOSR FA9550-18-1-0167 (A.O.).  NSF-DMS 1713691 (J.C.).
Google gift (B.A.)
}
\begin{document}
\begin{abstract} 
In this work we study input gradient regularization of deep neural networks, and demonstrate that such regularization leads to generalization proofs and improved adversarial robustness.   The proof of generalization does not overcome the curse of dimensionality, but it is independent of the number of layers in the networks.  The adversarial robustness regularization combines adversarial training, which we show to be equivalent to Total Variation regularization, with Lipschitz regularization.  We demonstrate empirically that the regularized models are more robust, and that gradient norms of images can be used for attack detection. 
 \end{abstract}

\maketitle

\section{Introduction}
In this work we propose a modified loss function designed to improve adversarial robustness and allow for a proof of generalization.  Our theoretical contribution is a proof that the regularized model generalizes.  In fact, we prove more: we prove that models converge.  While the convergence rates do not overcome the curse of dimensionality, they are independent of the number of layers of the network.  Empirically, we demonstrate that regularization improves adversarial robustness.  We also use the gradient norm of the loss of a model to detect adversarial perturbed images. 

We show that input gradient regularization improves adversarial robustness, and in particular, that Lipschitz regularization allows us to prove generalization bounds in a nonparametric  setting. The regularized model exhibits a trade-off between the regularization (Lipschitz constant) and the accuracy (expected loss) of a model. We also show that the regularization parameter can be chosen so that the loss of accuracy can controlled.  Our results, if applied to deep neural networks, give a generalization rate independent of the number of layers in the network.   This is in contrast to existing rates for generalization of deep neural networks, such as \cite{bousquet2002stability,xu2012robustness}, which have exponential complexity in the number of layers.

Machine learning models are vulnerable to adversarial attacks at test time \cite{10.1007/978-3-642-40994-3_25}.  Specifically, convolutional neural networks for image classification are vulnerable to small (imperceptible to the human eye) perturbations of the image which lead to misclassification \cite{szegedy2013intriguing, goodfellow2014explaining}. 

Addressing the problem of adversarial examples is a requirement for deploying these models in applications areas where errors are costly.  
One approach to this problem is to establish robustness guarantees. Weng et al.~\cite{weng2018evaluating} and Hein and Andriushchenko \cite{hein2017formal} propose the Lipschitz constant of the model as a measure of the robustness of the model to adversarial attacks.  Cranko et al.~\cite{cranko_lipschitz_2018} showed that Lipschitz regularization can be viewed as a special case of distributional robustness.  Lipschitz regularization has been implemented in \cite{gouk2018regularisation} and \cite{yoshida2017spectral}. Training models to have a small Lipschitz constant  improves empirical adversarial robustness \cite{tsuzuku_margin_2018,cisse2017parseval}. 

Adversarial training \cite{szegedy2013intriguing, goodfellow2014explaining, madry_2017} is the most effective method to improve adversarial robustness.  
However, to date, all method with improved   adversarial robustness have decreased test accuracy. Tsipras et al.~\cite{tsipras2018robustness} claims that adversarial training inevitably decreases model accuracy, while other work \cite{torkamani2013convex, goodfellow2014explaining, miyato2018virtual} argues that adversarial training should improve the accuracy of a model. 
 
The Lipschitz constant of a model also appears in the study of generalization. Bartlett et al.~\cite{bartlett2017spectrally} proposed the Lipschitz constant of the network as a candidate measure for the Rademacher complexity~\cite {shalev-shwartz_ben-david_2014}.   Data independent upper bounds on the  Lipschitz constant of the model go back to \cite{bartlett1997valid}, while robustness and generalization is studied in \cite{sokolic2017robust}. 

Another challenge is to develop defended models which maintain high accuracy on test data.  Existing defence methods lose accuracy on (non-attacked) test sets.  Tsipras et al.~\cite{tsipras2018robustness} argue that there is a trade-off between accuracy and robustness.  Assessing the effectiveness of defences requires careful testing with multiple attacks \cite{goodfellow_2018}.  Since existing defences may also be vulnerable to new and stronger attacks, Carlini and Wagner \cite{carlini2017adversarial} and Athalye et al.~\cite{athalye2018obfuscated} recommend designing specialized attacks against the model.   As an alternative to attacks, robustness measures can be used  to estimate the vulnerability of a model to
adversarial attack. Weng et al.~\cite{weng2018evaluating} and Hein and Andriushchenko \cite{hein2017formal} propose the Lipschitz constant of the model  as a robustness measure.  Xiao et al.~\cite{xiao2018training} design networks amenable to fast robustness verification.


\section{Machine learning, variational problems, and gradient regularization}
\label{sec:variational}

In parametric learning problems, models are trained by minimizing the training loss
\begin{equation}\label{eq:trainingloss}
\min_{u\in \F }\sum_{i=1}^n \ell(u(x_i),y_i),
\end{equation}
where $(x_i,y_i)$ is the training data, $\ell$ is the loss, and $\F$ is the parametric hypothesis space. As the hypothesis space $\F$ grows in complexity, the learning problem \eqref{eq:trainingloss} becomes ill-posed, in the sense that many functions $u\in \F$ will give very small training loss, and some will generalize well while others will overfit. In this case, we treat the problem as a nonparametric learning problem, and some form of regularization is required to encourage the learning algorithm to choose a good minimizer $u\in \F$ from among all the choices that yield small training loss. Even though modern deep learning is technically a parametric learning problem, the high degree of expressibility of deep neural networks effectively renders the problem nonparametric. For such problems, regularization of the loss is necessary to overcome ill-posedness and lead to a well-posed problem, which can be addressed by the theory of inverse problems in the calculus of variations~\cite{dacorogna2007direct}.  This theory plays a fundamental role in mathematical approaches to image processing~\cite{aubert2006mathematical}.   

Gradient regularization is classical in ill-posed problems and leads to a variational problem of the form
\begin{equation}\label{eq:var_reg}
\min_{u \in \F }\{L[u,u_0] + \lambda \Reg[\nabla  u]\},
\end{equation}
where $\Reg$ is the gradient regularizer, $\lambda>0$ is a parameter, and $L[u,u_0]$ is the \emph{fidelity}, which is analogous to the loss in \eqref{eq:trainingloss}, and measures the discrepancy between the image $u$ and the true image $u_0$. Here, the function space $\F$ may encode boundary conditions, such as Dirichlet or Neumann conditions. The classical Tychonov regularization \cite{tikhonov2009solutions} corresponds to $\Reg[u]=\| |\nabla u(x)| \|_{L^2(X)}^2$, while the Total Variation regularization model of~\cite{rudin1992nonlinear} corresponds to  $\Reg[u]= \| |\nabla u(x)| \|_{L^1(X)}$, where $X$ is the domain of $u$.   Each of these regularizations use $L^p$ norms of the gradient.  We recall that {Rademacher's Theorem}~\cite[\S 3.1]{evans2018measure} says that 
\begin{equation}
	\label{LipRad}
\Lp(u) =	\| |\nabla u(x)| \|_{L^\infty(X) },
\end{equation}
when $X$ is convex, where $\Lp(u)$ is the Lipschitz constant of $u:X\to Y$ given by
\begin{equation}\label{eq:Lip}
\Lp(u) = \sup_{\substack{x,y\in X\\ x\neq y}} \frac{\|u(x)-u(y)\|_Y}{\|x-y\|_X}.
\end{equation}
Thus, Lipschitz regularization corresponds to regularization of the $L^\infty$ norm of the gradient $ \| |\nabla u(x)| \|_{L^\infty(X)}$.

Lipschitz regularization appears in image inpainting \cite{bertalmio2000image},
where it is used to fill in missing information from images, as well as in \cite[\S 4.4]{pock2010global} \cite{elion2007image} and \cite{Guillot2009}. It has also been used recently in graph-based semi-supervised learning to propagate label information on graphs~\cite{kyng2015algorithms,calder2017consistency}. Tychonoff regularization of neural networks, which was studied by~\cite{drucker1992improving} is equivalent to training with noise~\cite{bishop1995training} and leads to better generalization. 
Recent work on semi-supervised learning suggests that higher $p$-norms of the  gradient are needed for generalization when the data manifold is not well approximated by the data~\cite{el2016asymptotic, calder2018game, slepcev2017analysis}. 

In the case that $\F=\{u:X\to Y\, : \, u=g \text{ on }\partial X\}$, the variational problem \eqref{eq:var_reg} can be interpreted as a relaxation of the Lipschitz Extension problem
\begin{equation}\label{LipExt}
\min_{\substack{u:X\to Y\\ u=g \text{ on }\partial X}} \Lp(u),
\end{equation}
where the parameter $\lambda^{-1}$ in \eqref{eq:var_reg} is a parameter that replaces the unknown Lagrange multiplier. The problem \eqref{LipExt} has more that one solution.  Two classical results giving explicit solutions in one dimension go back to Kirzbaum and to McShane~\cite{mcshane1934extension}.  However solving \eqref{LipExt} is not practical for large scale problems.  There has be extensive work on the Lipschitz Extension problem, see, \cite{johnson1984extensions}, for example.  More recently, optimal Lipschitz extensions have been studied, with connections to Partial Differential Equations, see~\cite{aronsson2004tour}.  

In general, variational problems can be studied by the direct method in the calculus of variations~\cite{dacorogna2007direct}, which establishes existence of minimizers.  To solve a general convex variational problem, we can discretize the functional to obtain  a finite dimensional convex optimization problem, that can be solved numerically. Alternatively, one can find the first variation, which is a partial differential equation (PDE)~\cite{evansbook}, and then solve the PDE numerically.  Both approaches are discussed in~\cite{aubert2006mathematical}.

\subsection{Robustness and the Lipschitz constant}\label{sec:robustLip}\label{sec:reg_lit}\label{sec:ImpLip}\label{sec:bartlett}
In deep learning, the Lipschitz constant of a model appears in the context of
 model robustness \cite{xu2012robustness},   generalization
 \cite{bartlett1997valid,sokolic2017robust}, and Wasserstein
 GANs~\cite{gulrajani2017improved, miyato_spectral_2018, anil2018sorting}.
 
 In the study of adversarial robustness, Weng et al.~\cite{weng2018evaluating} and Hein and Andriushchenko \cite{hein2017formal} showed that the Lipschitz
constant of the model gives a certifiable minimum adversarial distance under
which the model is robust to perturbations. Thus, if the Lipschitz of the model
can be controlled, the model will be robust.
Indeed, training models to have small Lipschitz constant has empirically been shown to
improve adversarial robustness \cite{tsuzuku_margin_2018,cisse2017parseval}. 
By its very definition, the Lipschitz constant determines model robustness
(sensitivity to changes in the data):

However estimating the Lipschitz constant of a deep model can be challenging.  Data independent upper bounds on the  Lipschitz constant of the model go back to \cite{bartlett1997valid}.  These bounds are based on the product of the norm of weight matrices, but the gap in the bound can grow exponentially in the number
of layers, since it  neglects the effects of the activation function.
Alternative methods, as in \cite{weng2018evaluating} can be costly. Estimating the Lipschitz constant by the maximum gradient norm over the training data can be a gross underestimate, since \eqref{LipRad} indicates that the gradient norm should be evaluated over the entire domain. 

\subsection{Input gradient regularization}
\label{sec:inputgrad}

We now describe our proposed input gradient regularization for training deep neural networks. We have two objectives to achieve with gradient regularization: (i) enable proofs of generalization for deep learning, and (ii) improve adversarial robustness. 

Let $X\subset \R^d$ denote the data space, and let $Y$ denote the label space. Let $\DD_n=\{x_1,\dots,x_n\}\subset X$ denote a sample of data of size $n$, and let $y_1,\dots,y_n\in Y$ denote the corresponding labels. Let $\ell:Y\times Y\to \R$ denote a loss function. We define the \emph{input gradient regularized loss} to be 
\begin{equation}\label{main_equation}
J_n[u]:=\underbrace{\frac{1}{n}\sum_{i=1}^n\ell(u(x_i),y_i)}_{\text{Training loss}} + \underbrace{\eps \| \nabla u \|_{L^1(X)}}_{\text{TV regularization}} + \underbrace{\lambda \Lp(u),}_{\text{Lipschitz regularization}}
\end{equation}
where $\eps,\lambda>0$, and $\Lp(u)$ is the Lipschitz constant of $u$ defined in \eqref{eq:Lip}. The model includes a penalty for the average and maximum values of the norm of the gradient.  The average corresponds to the Total Variation regularization term, while the maximum  term corresponds to Lipschitz regularization. 

For the purpose of generalization, we need only consider the Lipschitz regularization. That is, we set $\epsilon = 0$ in \eqref{main_equation} to obtain the problem
\begin{equation}
\label{Main_Problem_Data_Intro}
	\min_{u: X \to Y}\left\{ \frac 1 n \sum_{i=1}^n\ell(u(x_i), y_i) + \lambda\Lp(u)\right\}.
\end{equation}
Letting $u_n$ be any minimizer in \eqref{Main_Problem_Data_Intro}, we study the limit of $u_n$ as $n\to \infty$.  Our main results, Theorems~\ref{thm:gen3} and~\ref{thm:gen2},  show the that minimizers of $J^n$  converge,  as the number of training points $n$ tends to $\infty$, to minimizers of a limiting functional $J$. This result is stronger than generalization, which only requires that the empirical loss converges to the expected loss. We state and prove our main generalization results in Section \ref{sec:LipGen}.  See Figure~\ref{fig:solution} for an illustration of Lipschitz regularization~\eqref{eq:var_reg} of a synthetic classification problem.

For defense against adversarial attacks, we focus on the total variation regularization by setting $\eps>0$ and $\lambda=0$ in \eqref{main_equation}. To make the problem tractable numerically, we numerically approximate the regularization term on the training dataset as
\begin{equation}\label{eq:TVnum}
\| \nabla u \|_{L^1(X)} \approx \frac{1}{n}\sum_{i=1}^n |\nabla_x \ell(u(x_i),y_i)|,
\end{equation}
which is less computationally complex compared to computing the full gradient $\nabla u$. When discussing adversarial training, we will often write $\ell(u(x),y)$ as $\ell(x)$, since the dependence on the labels  $y$ is not important.
Thus, our input gradient regularization for adversarial defenses can be written in the form 
\begin{equation}\label{main_equation_AT}
  {\EE_{x\sim \DD_n}}\left[  \loss\left(x\right )\right]  + \varepsilon 
  \underbrace{\EE_{x\sim \DD_n} \left[\|\nabla_x
\loss(x)\|  \right]}_{\text{Total Variation}} 
\end{equation}
where $\EE_{x\sim \DD_n}$ denotes the empirical expectation with respect to the dataset $\DD_n$.  We show in Section \ref{sec:intro_attacks} that the total variation term in \eqref{main_equation_AT} can be interpreted as the regularization which arises from one-step adversarial training.

Implementing \eqref{main_equation_AT} is achieved using a modification of one step adversarial training. At each iteration, the norm gradient of the model loss is computed for each image in the batch.  The Total Variation term is the average
over the batch, and the Lipschitz term is the maximum over the batch.  
It is possible to compute $\nabla_x \ell(x)$ using backpropagation, however this means training is slower, because
it requires a second pass of backpropagation for the model parameters, $w$. 
A more efficient method is  to 
 accurately estimate $\|\nabla_x \ell(x)\|$ via a finite difference approximation \cite{iserles2009first}:
\[
  \|\nabla_x \ell(x) \| \approx \frac{1}{h} \left( \ell(x+hd)-\ell(x) \right )
\]
where $d$ is the attack direction vector, and $h$ is small (we use $h = $ \num{1e-3}).


\begin{figure}[htb]
\centering
\includegraphics[width=2.5in,height=2in]{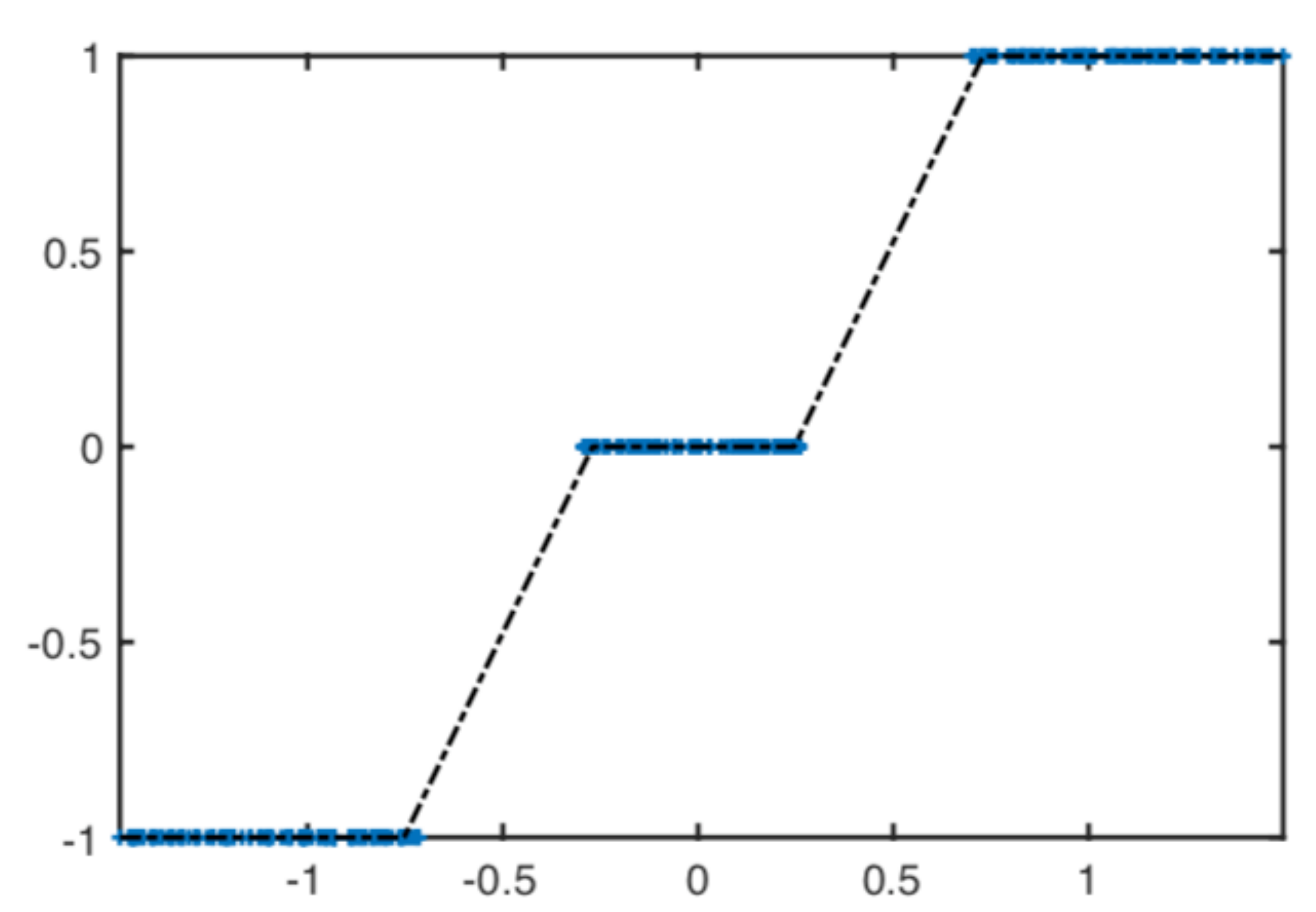}
\hspace{0.75cm} 
\includegraphics[width=2.5in,height=2in]{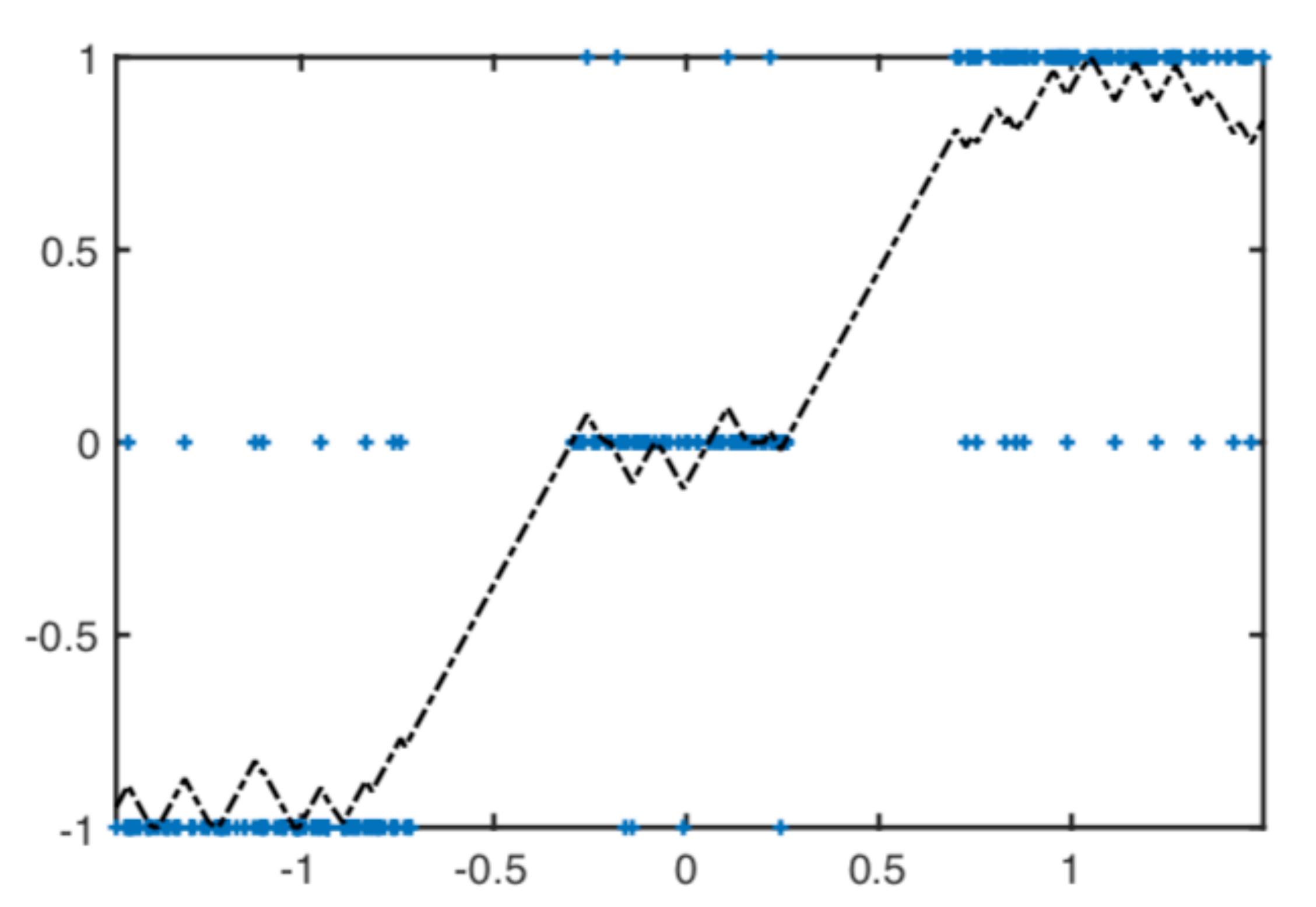}
\caption{Synthetic labelled data and Lipschitz regularized solution $u$.
Left: The solution value matches the labels perfectly on a large portion of the data set. Right: 10\% of the data is corrupted by incorrect labels; the regularized solution corrects the errors.}
\label{fig:solution}
\end{figure}

\section{Adversarial attacks and defences}
\label{sec:intro_attacks}

Adversarial attacks seek to find a small perturbation of an image vector which leads to a
misclassification by the model. Since undefended models are vulnerable to very small norm attacks, the goal of adversarial defence methods is to force successful attack vectors to be larger in size.  To allow gradients to be used, the classification attack is usually replaced by an attack on the loss.

 The earliest and most successful defence to adversarial perturbations is
 adversarial training \cite{szegedy2013intriguing, goodfellow2014explaining,
 tramer_ensemble_2017, madry_2017, kannan2018adversarial}. 
Adversarial training  corresponds to teaching the network to correctly identify
adversarially perturbed images by replacing training images $x$ with attacked images $x'$.
The attacked images used by Madry et al.~\cite{madry_2017} arise from I-FGSM.  
However determining the best choice of attack vectors is model dependent, and requires choosing 
a number of hyperparameters (for example $\varepsilon$ and the
number of iterations).  Current models trade off model robustness and model accuracy; \cite{tsipras2018robustness} claims this cannot be avoided.


Focusing on the model loss, $\loss(x)$, defined in Section \ref{sec:inputgrad},   adversarial perturbations can be modelled as an optimization problem.
For a given adversarial distance, denoted $\varepsilon$, the optimal loss attack on the image vector, $x$, is the solution of
\begin{equation}
\label{ideal_attack}
\max_{ \|x'-x\| \leq \varepsilon} \loss(x').
\end{equation}
The solution, $x'$ is the perturbed image vector within a given distance of $x$, which maximally increases the loss.

The Fast Gradient Sign Method (FGSM) \cite{goodfellow2014explaining} arises when
attacks are measured in the $\infty$-norm.  It corresponds to a one step attack in
the direction $d_1$ given by the signed gradient
\begin{align} \label{eq:l1_dual_vector}	
(d_1)_i = \frac{ \nabla \ell(x)_i }{| \nabla \ell(x)_i|}.
\end{align}
The attack direction \eqref{eq:l1_dual_vector} arises from linearization of the objective in \eqref{ideal_attack}, which leads to 
\begin{equation}\label{norm_infty_dual}
	\max_{ \|d\|_\infty \leq 1}  d \cdot \nabla \loss(x) 
\end{equation} 
By inspection, the minimizer is the signed gradient vector,
\eqref{eq:l1_dual_vector}, and the optimal value 
is $\| \nabla \loss(x)\|_1$.  When the  $2$-norm is used in
\eqref{norm_infty_dual}, the optimal value is $\| \nabla
\loss(x)\|_2$ and the optimal direction is the normalized gradient
\begin{align} \label{eq:l2_dual_vector}
  \dfrac{ \nabla \ell(x) }{\| \nabla \ell(x) \|_2}.
\end{align}
More generally, when a generic norm is used in \eqref{norm_infty_dual}, the
maximum of the linearized objective is the dual norm~\cite[A.1.6]{boyd2004convex}.

Iterative versions of these attacks lead to respectively Iterative FGSM (I-FGSM)
\cite{kurakin2016adversarial}, 
and projected gradient descent in the 2-norm.  
These attacks require gradients of
the loss. However, other attacks use only model gradients (such as the Carlini-Wagner
attack \cite{carlini_towards_2016}), while others use no gradients at all, but only the classification
decision of the model (for example the Boundary
attack \cite{brendel2018decisionbased}).

\subsection{Image vulnerability}

Given the model, $f$, and a correctly classified image vector, $x$, how can we predict
when it is vulnerable to attack?     This question has already been
addressed by examining the \emph{model} Lipschitz constant, which gives a bound on the worst-case adversarial distance \cite{hein2017formal,weng2018evaluating}.  However the Lipschitz constant does not distinguish whether one image $x$ is more vulnerable to attack than another. 

We argue that we can compare the vulnerability of different images using \emph{the size of the model loss gradient} at the image. To do so, we make the simplifying assumption that the attack is a one-step attack.  This argument leads to useful results, which we show later are effective even for general attacks. In the case of a one step attack, this argument is reasonable: consider the extreme case of a zero gradient: then the one-step attack does not perturb the image at all.  In the other extreme case, of a very large gradient, a small one step attack will change the loss by a large amount. 

Given an adversarially perturbed training image $x'$, write $x' = x+\varepsilon d$, where $d$ is a unit vector, $d$, and $\varepsilon$ is small. 
Using a Taylor expansion \cite{iserles2009first} of the loss, we obtain 
\begin{equation}\label{Taylor_Loss}
\loss(x')  = \loss(x) + \varepsilon d \cdot \nabla \loss(x) + \mathcal
O(\varepsilon^2).
\end{equation}

Substitute $d$ for either the signed gradient \eqref{eq:l1_dual_vector} or the normalized
gradient \eqref{eq:l2_dual_vector} in \eqref{Taylor_Loss} to obtain 
\begin{equation} 
	\label{bound_1}
\loss(x+\varepsilon d) = \loss(x) + \varepsilon \|\nabla \loss (x)\|_* + \bO(\varepsilon^2)
\end{equation}
 where $*$ is dual to the norm measuring adversarial perturbations.
Thus \emph{images with larger loss gradients are more vulnerable to gradient attacks}. Moreover,
the type of attack vector used (either signed gradient or normalized gradient) determines
the correct norm in which to measure the vulnerability.

We can go a little further with \eqref{bound_1}, which measures the vulnerability of an image to attack, by asking what is the size perturbation needed to misclassify an image. Set  $\loss_0$ to be the smallest value of the loss for which the image is
misclassified. Then, for small values of $\varepsilon$, using \eqref{bound_1},  we can estimate the
minimum adversarial distance by 
\begin{equation}
	\label{robustx_defn}
	\varepsilon(x) = \frac{\loss_0 - \loss(x)}{\|\nabla\loss(x)\|}, \quad \text{ when }\loss(x) \leq \loss_0 
\end{equation}
and $\varepsilon(x) = 0$ when $x$ is already misclassified. 
We remark that for the cross entropy loss, $\ell_0$ is bounded above by
\begin{equation}\label{l0_estimate}
	\loss_0 \leq \log( \text{number of labels} ).
\end{equation}

\section{Lipschitz regularization and generalization}
\label{sec:LipGen}

\subsection{Generic regularizers}

Before studying Lipschitz regularization, we step back and consider properties of general regularizers. Let $\mathcal H$ be a collection of function $h:X\to Y$ containing the constant functions.

\begin{definition}\label{def:reg}
 We say $\Reg :\mathcal H \to \R$  is a \emph{regularizer} if (i) $\Reg(h) \geq 0$ and (ii) $\Reg(h) = 0$ if and only if $h$ is constant function.
	 For a given loss $L$, define the regularized learning problem
	\begin{equation}\label{RLM}\tag{RLM}
	\min_{f} \{ L(f) + \lambda \Reg(f)\}.
	\end{equation}
The problem \eqref{RLM} corresponds to \emph{regularized loss minimization} (RLM) instead of \emph{expected loss minimization} (ELM).
Define for $\lambda \geq 0$
\begin{equation}
	\label{ulambda_defn}
	f^\lambda \in \argmin \{ L(f) + \lambda \Reg(f)\}.
\end{equation}
\end{definition}
\begin{remark}
   The loss $L$ in Definition \ref{def:reg} would normally be the training loss on a training set, as in Eq.~\eqref{main_equation}. In this case, the regularization functional depends on the entire function $f$, and not just the restriction of $f$ to the training data. In practice, it should be evaluated on a validation set, which leads to additional $O(1/\sigma{m})$ error in evaluating it, which is independent of dimension.
\end{remark}

Immediately, we can show that we have some good properties of the regularized functional.  In the next lemma, we show that for small values of $\lambda$ the minimizers, $f^\lambda$ of \eqref{RLM} increase the empirical loss at most linearly in $\lambda$.  We also show that for large values of $\lambda$ the value of the regularization term goes to zero as $1/\lambda$. 
\begin{definition}
	The \emph{realizable} case is the case where $f^* \in \mathcal H$, where $f^*$ is the true label function. In particular, we can always achieve zero empirical loss. Define $\bar f$ to be the average of $f^*$ and write $C_L = L(\bar f)$. 
%
\end{definition}

\begin{lemma}
	Define $f^\lambda$ as in \eqref{ulambda_defn}, and assume the loss is nonnegative.  Then for any $\lambda>0$
\begin{equation}\label{J_bounds}
\left\{	\begin{aligned}
	L(f^\lambda) &\leq  L(f^0) + \lambda \Reg(f^0) \\
	\Reg(f^\lambda) & \leq \frac 1 \lambda C_L
\end{aligned}\right.
\end{equation}
Note in the realizable case, $L(f^0) = 0$. 
\end{lemma}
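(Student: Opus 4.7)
The plan is to prove both inequalities by a direct competitor argument using the defining optimality of $f^\lambda$ in \eqref{ulambda_defn}, together with the two nonnegativity hypotheses (loss $L \geq 0$ and, by Definition \ref{def:reg}, $\Reg \geq 0$).

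For the first inequality, I would use $f^0$ itself as a competitor against $f^\lambda$. By the optimality of $f^\lambda$ for the regularized problem with parameter $\lambda$,
\begin{equation*}
L(f^\lambda) + \lambda \Reg(f^\lambda) \;\leq\; L(f^0) + \lambda \Reg(f^0).
\end{equation*}
Since $\Reg(f^\lambda) \geq 0$ by property (i) of Definition \ref{def:reg}, dropping that nonnegative term from the left-hand side yields $L(f^\lambda) \leq L(f^0) + \lambda \Reg(f^0)$, as desired.

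For the second inequality, I would compare against the constant function $\bar f$. Because $\bar f$ is constant, property (ii) of Definition \ref{def:reg} gives $\Reg(\bar f) = 0$, so using $\bar f$ as a competitor against $f^\lambda$,
\begin{equation*}
L(f^\lambda) + \lambda \Reg(f^\lambda) \;\leq\; L(\bar f) + \lambda \Reg(\bar f) \;=\; C_L.
\end{equation*}
Now I would use the assumption that the loss is nonnegative to drop $L(f^\lambda) \geq 0$ from the left-hand side, obtaining $\lambda \Reg(f^\lambda) \leq C_L$, and finally divide by $\lambda > 0$ to conclude $\Reg(f^\lambda) \leq C_L/\lambda$.

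There is no real obstacle here: the entire argument is the standard two-way competitor trick for penalized variational problems, and the only substantive ingredients are (a) existence of the minimizer $f^\lambda$ (implicit in \eqref{ulambda_defn}), (b) nonnegativity of $\Reg$ and $L$, and (c) the vanishing of $\Reg$ on constants so that a constant competitor carries no regularization cost. The realizable remark $L(f^0) = 0$ is not needed for the proof itself; it only sharpens the interpretation of the first bound in that special case.
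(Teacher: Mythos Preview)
Your proof is correct and follows essentially the same approach as the paper: compare $f^\lambda$ against $f^0$ for the first bound (dropping the nonnegative $\lambda\Reg(f^\lambda)$), and against the constant $\bar f$ for the second bound (using $\Reg(\bar f)=0$ and dropping the nonnegative $L(f^\lambda)$). The paper's version is just a more compressed write-up of exactly these two competitor arguments.
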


\begin{proof}
Since $\Reg(f^\lambda)\geq 0$ we have
\[L(f^\lambda) \leq L(f^\lambda) + \lambda \Reg(f^\lambda) \leq L(f^0) + \lambda\Reg(f^0),\]
which establishes the first inequality in \eqref{J_bounds}. For the other inequality, we use the nonnegativity of the loss to find that
\[\lambda\Reg(f^\lambda) \leq L(f^\lambda) + \lambda \Reg(f^\lambda) \leq L(\bar{f}) = C_L,\]
which completes the proof.
\end{proof}

The previous result gives a bound on the empirical loss of a minimizer of \eqref{RLM} for a generic regularizer $\Reg$, but it does not say anything about the expected loss.   However it does say that we can regularize without changing the empirical loss too much.  On the other hand, for large values of the regularizer, we expect that $f^\lambda$ is going to a constant, which suggests that the \emph{generalization gap} is small.   However, not every regularizer is a good regularizer, since we could have the trivial regularizer
\begin{equation}
	\Reg^0(h) =
	\begin{cases}
		0 & h \text{ is constant } \\
		1 & \text{ otherwise } 
	\end{cases}
\end{equation}
which does nothing and then forces the solution to a constant.

\subsection{Lipschitz regularization leads to generalization}

We now turn to study generalization properties of Lipschitz regularization. Generalization in machine learning refers to bounds on the generalization gap
\[\E_{x\sim \rho} \left [ \ell(u_n(x), y(x))\right ] - \E_{x\sim \DD_n}\left [ \ell(u_n(x), y(x))\right ]\]
where $u_n:X\to Y$ denotes the function learned  from the training data $\DD_n$ by solving \ref{Main_Problem_Data_Intro}. We prove something stronger; namely, we prove that the solutions $u_n$ converge almost surely as $n\to \infty$ to the solution of a continuum variational problem of the form
\begin{equation}\label{Main_Problem}
\min_{u: X \to Y}  J[u] := \E_{x\sim \rho} \left [ \ell(u(x), y(x))\right ]  +  \lambda \Lp(u).
\end{equation}
Theorems~\ref{thm:gen2} and~\ref{thm:gen3} below show uniform convergence, and convergence with a rate in the strongly convex case, both in the limit as the number of training points $n$ tends to $\infty$. While our rates are dimension dependent, they do not depend on the number of layers in the deep neural network.

We now describe our assumptions on the loss and data distribution. We always make the assumption that  $\ell(y_1,y_2) \geq 0$    for all  $y_1, y_2$ with equality if and only if $y_1 = y_2$.  We say $\ell(y_1,y_2)$ is strictly convex, if it is strictly convex as a function of $y_1$ for all values of $y_2$.  We say $\ell$ is \emph{strongly convex} with parameter $\theta>0$, if 
\begin{equation}\label{eq:sconvex}
\ell(ty_1 + (1-t)y_2,y_0) + \tfrac{\theta}{2}t(1-t)\|y_1-y_2\|_{Y}^2  \leq t\ell(y_1,y_0) + (1-t)\ell(y_2,y_0)
\end{equation}
holds for all $y_0,y_1,y_2\in Y$ and $0 \leq t \leq 1$.
\begin{remark}
\label{rem:crossentropy}
The standard cross-entropy loss is not strongly convex or Lipschitz continuous.  In order to apply our results, we can use the \emph{regularized cross entropy loss} with parameter $\epsilon >0$,
\begin{equation}\label{eq:regce}
\ell^{KL}_\eps(y,z) =-  \sum_{i=1}^D {(z_i + \epsilon)} \log\left( \frac{y_i + \epsilon}{z_i + \epsilon} \right).
\end{equation}
which is Lipschitz continuous and strongly convex on the probability simplex. 	
\end{remark}

We make the standard manifold assumption \cite{ssl}, and assume the data distribution $\rho$ is a probability density supported on a compact, smooth, $m$-dimensional manifold $\M$ embedded in $X=[0,1]^d$, where $m \ll d$. We denote the probability density again by $\rho:\M\to [0,\infty)$. Hence, the data $\D_n$ is a sequence $x_1,\dots,x_n$ of \emph{i.i.d.}~random variables on $\M$ with probability density $\rho$. For $\lambda>0$ we let $u^\lambda$ denote any minimizer of \eqref{Main_Problem}, that is
\[
u^\lambda \in \argmin_{u:X\to Y} J[u].
\]
While $u^\lambda$ is not unique, Lemma \ref{lem:unique} below shows that $u^\lambda$ is unique on the data manifold when the loss $\ell$ is strictly convex. Thus, the choice of minimizer $u^\lambda$ is unimportant, since Lemma \ref{lem:unique} guarantees the minimizer is unique on the data manifold, and all minimizers of \eqref{Main_Problem} have the same Lipschitz constant. 
\begin{lemma}\label{lem:unique} Suppose the loss function is strictly convex in $y_1$. If $u,v\in W^{1,\infty}(X;Y)$ are two minimizers of \eqref{Main_Problem} and $\inf_{\M}\rho>0$ then $u=v$ on $\M$.
\end{lemma}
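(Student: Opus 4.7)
The plan is a standard strict-convexity midpoint argument applied jointly to the fidelity and the Lipschitz seminorm. Let $u,v \in W^{1,\infty}(X;Y)$ be two minimizers of $J$, with common minimum value $J^* = J[u]=J[v]$, and set $w := \tfrac{1}{2}(u+v)$.

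The first step is to show the functional $J$ is convex. The Lipschitz constant is a seminorm on $W^{1,\infty}(X;Y)$, hence $\Lp(w) \leq \tfrac{1}{2}\Lp(u) + \tfrac{1}{2}\Lp(v)$. For the fidelity, convexity of $\ell$ in its first argument gives the pointwise bound
\begin{equation}\label{eq:plan-pointwise}
\ell(w(x),y(x)) \leq \tfrac{1}{2}\ell(u(x),y(x)) + \tfrac{1}{2}\ell(v(x),y(x)) \quad \text{for all } x\in X,
\end{equation}
and integration against $\rho$ preserves the inequality. Combining these, $J[w] \leq \tfrac{1}{2}J[u] + \tfrac{1}{2}J[v] = J^*$. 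Since $J[w] \geq J^*$ by minimality, equality must hold, and therefore the integrated form of \eqref{eq:plan-pointwise} is itself an equality.

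The second step converts the integral equality back to a pointwise statement. Define the (nonnegative, measurable) defect
\[
g(x) := \tfrac{1}{2}\ell(u(x),y(x)) + \tfrac{1}{2}\ell(v(x),y(x)) - \ell(w(x),y(x)) \geq 0.
\]
The equality case above says $\int_{\M} g\, \rho \, d\mathrm{vol} = 0$. Since $\inf_{\M}\rho > 0$, we conclude $g = 0$ almost everywhere on $\M$ with respect to the volume measure. Strict convexity of $\ell$ in its first argument implies $g(x) > 0$ whenever $u(x) \neq v(x)$. Hence $u(x) = v(x)$ for almost every $x\in\M$.

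The final step upgrades the almost-everywhere equality to everywhere on $\M$. Since $u,v \in W^{1,\infty}(X;Y)$, they admit Lipschitz (in particular continuous) representatives, and the set $\{u=v\}$ is closed in $\M$. An almost-everywhere coincidence on the manifold, combined with continuity, forces $u=v$ on all of $\M$. The only subtle point worth checking is that the seminorm inequality $\Lp(w) \leq \tfrac{1}{2}(\Lp(u)+\Lp(v))$ holds with respect to the ambient norm used in \eqref{eq:Lip}, which is immediate from the triangle inequality applied to $(u+v)(x) - (u+v)(y)$; no other step requires more than strict convexity of $\ell$ and positivity of $\rho$, so there is no genuine obstacle.
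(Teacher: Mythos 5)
Your proof is correct and follows essentially the same midpoint-convexity argument as the paper's: form $w=\tfrac12(u+v)$, use minimality to force equality in the convexity inequality for $J$, and then invoke strict convexity of $\ell$ plus $\inf_\M\rho>0$ to conclude $u=v$ on $\M$. The paper's version is terser and leaves implicit the steps you spell out (that the Lipschitz and fidelity inequalities must each separately become equalities, and that the almost-everywhere identity upgrades to everywhere by continuity of Lipschitz representatives), but there is no substantive difference in route.
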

We recall that $W^{1,\infty}(X;Y)$ is the space of Lipschitz mappings from $X$ to $Y$.   
\begin{proof}
Let $w = (u+v)/2$. Then by convexity of $J$
\begin{align}\label{eq:convexity}
J[w] \leq  \tfrac{1}{2}J[u] + \tfrac{1}{2}J[v] = \min_{u}J[u].
\end{align}
Therefore, $w$ is a minimizer of $J$ and so we have equality above, which yields
\[\int_\M \left[ \tfrac{1}{2}\ell\left(u,y \right) + \tfrac{1}{2}\ell\left(v,y \right)\right]\rho\, dVol(x) = \int_\M \ell\left( \tfrac{1}{2}u + \tfrac{1}{2}v,y \right)\rho\, dVol(x).\]
Since $\ell$ is strictly convex in its first argument, it follows that $u=v$ on $\M$.
\end{proof}

We now state our main generalization results.
\begin{theorem}\label{thm:gen2}
Suppose that $\inf_\M \rho > 0$ and $\ell:Y\times Y\to \R$ is Lipschitz and strictly convex. Then with probability one
\begin{equation}\label{eq:convM}
u_n \longrightarrow u^\lambda \ \ \text{uniformly on }\M \text{ as }n\to \infty,
\end{equation}
where $u_n$ is any sequence of minimizers of \eqref{Main_Problem_Data_Intro}. Furthermore, every uniformly convergent subsequence of $u_n$ converges on $X$ to a minimizer of \eqref{Main_Problem}.
\end{theorem}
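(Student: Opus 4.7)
The plan is a direct-method / $\Gamma$-convergence argument: establish compactness of the minimizers $u_n$, upgrade the pointwise law of large numbers to a uniform Glivenko--Cantelli bound over equi-Lipschitz classes, identify any subsequential limit as a minimizer of the continuum functional $J$, and finally invoke Lemma~\ref{lem:unique} to promote subsequential convergence on $\M$ to convergence of the whole sequence.

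For compactness, I would test $J_n[u_n]$ against a constant competitor $u \equiv c \in Y$. Since $\ell$ is Lipschitz in its first argument and $Y$ is effectively bounded (e.g.\ the probability simplex relevant in classification, cf.\ Remark~\ref{rem:crossentropy}), this yields $\lambda \Lp(u_n) \leq \tfrac{1}{n}\sum_{i=1}^n \ell(c,y_i) \leq C$ uniformly in $n$, together with uniform boundedness of $u_n$ on $X=[0,1]^d$. Arzela--Ascoli then produces a subsequence $u_{n_k}$ converging uniformly on $X$ to some $u^* \in W^{1,\infty}(X;Y)$, with $\Lp(u^*) \leq \liminf_k \Lp(u_{n_k})$ by lower semicontinuity of the Lipschitz seminorm under uniform convergence.

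The main technical step, and the principal obstacle, is a uniform law of large numbers over the equi-Lipschitz class $\F_M := \{u : X \to Y : \Lp(u) \leq M\}$:
\[
\sup_{u \in \F_M}\Bigl| \tfrac{1}{n}\sum_{i=1}^n \ell(u(x_i), y_i) - \E_{x\sim \rho}[\ell(u(x), y(x))]\Bigr| \longrightarrow 0 \quad \text{a.s.}
\]
Restricted to the compact $m$-dimensional manifold $\M$, the class $\F_M|_\M$ has sup-norm covering numbers of order $\exp(C\delta^{-m})$ (up to logarithmic factors). A Hoeffding concentration bound on a $\delta$-net, combined with Borel--Cantelli and the Lipschitz continuity of $\ell$ to handle elements between net points, gives the uniform bound. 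This is precisely the step where the intrinsic manifold dimension $m$, rather than the ambient dimension $d$, enters the rate, matching the claims made elsewhere in the paper.

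With the uniform LLN in hand, the liminf inequality $J[u^*] \leq \liminf_k J_{n_k}[u_{n_k}]$ is immediate. For the matching upper bound, the fixed competitor $u^\lambda$ yields $J_n[u^\lambda] \to J[u^\lambda]$ almost surely by the pointwise strong law, so $J[u^*] \leq \limsup_k J_{n_k}[u_{n_k}] \leq \limsup_k J_{n_k}[u^\lambda] = J[u^\lambda]$, showing $u^*$ minimizes $J$; this already proves the second sentence of the theorem. For the first, Lemma~\ref{lem:unique} forces every minimizer of $J$ to coincide with $u^\lambda$ on $\M$, so every subsequence of $u_n$ has a further subsequence converging uniformly on $X$ whose limit equals $u^\lambda$ on $\M$. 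A standard subsequence-of-subsequence argument then upgrades this to uniform convergence of the full sequence on $\M$, yielding \eqref{eq:convM}.
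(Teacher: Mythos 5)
Your proposal is correct and follows essentially the same route as the paper's proof: uniform boundedness of $\Lp(u_n)$ by testing against a fixed competitor, Arzel\`a--Ascoli compactness, a uniform law of large numbers over equi-Lipschitz classes to pass to the limit in the empirical loss and identify any subsequential limit as a minimizer of $J$, then Lemma~\ref{lem:unique} plus a subsequence-of-subsequence argument. The only variations are cosmetic: you test against a constant function where the paper tests against $u^\lambda$ (both give the uniform Lipschitz bound), and you sketch the uniform LLN via a $\delta$-net with Hoeffding and Borel--Cantelli, whereas the paper isolates this as Lemma~\ref{lem:discrepancy} and proves it by partitioning into hypercubes and applying Bernstein's inequality to the cell counts --- the same metric-entropy idea dressed slightly differently, and the paper's version has the additional benefit of yielding an explicit rate that is reused in Theorem~\ref{thm:gen3}.
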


If we also assume strong convexity of the loss, then we get a convergence rate. 
\begin{theorem}\label{thm:gen3}
Suppose that $\ell:Y\times Y\to \R$ is Lipschitz and strongly convex. Then for any $t>0$, with probability at least $1-2t^{-\frac{m}{m+2}}n^{-(ct - 1)}$ all minimizing sequences $u_n$ of \eqref{Main_Problem_Data_Intro} satisfy
\begin{equation}\label{eq:conv3}
\frac{\theta}{2}\int_\M \|u_n - u^\lambda\|_Y^2\rho\, dVol(x) \leq C\lambda^{-1}J[u^\lambda]\left( \frac{t\log(n)}{n} \right)^{\frac{1}{m+2}}.
\end{equation}
\end{theorem}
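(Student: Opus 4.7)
The strategy combines strong convexity of $\ell$, optimality of $u_n$, and a uniform concentration bound over Lipschitz hypotheses via an $\epsilon$-net argument tailored to the $m$-dimensional manifold $\M$. I would first apply the strong convexity inequality \eqref{eq:sconvex} with $t=1/2$, $y_0=y(x)$, $y_1=u_n(x)$, $y_2=u^\lambda(x)$ pointwise on $\M$, then integrate against $\rho\,dVol$. Using subadditivity $\Lp(w)\le\tfrac12(\Lp(u_n)+\Lp(u^\lambda))$ for the midpoint $w=(u_n+u^\lambda)/2$ and optimality $J[w]\ge J[u^\lambda]$, this yields
\[
\tfrac{\theta}{2}\int_\M\|u_n-u^\lambda\|_Y^2\,\rho\,dVol \;\le\; 2\bigl(J[u_n]-J[u^\lambda]\bigr).
\]
Everything then reduces to bounding the continuum excess risk $J[u_n]-J[u^\lambda]$.

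For the excess risk, the Lipschitz regularizer $\lambda\Lp(u)$ is identical in $J$ and $J_n$, so $(J-J_n)[u]=\E_\rho[\ell(u,y)]-\E_{\rho_n}[\ell(u,y)]$. Using optimality of $u_n$ for $J_n$,
\[
J[u_n]-J[u^\lambda]\;\le\;(J-J_n)[u_n]-(J-J_n)[u^\lambda]\;\le\;2\sup_{u\in\F_L}\bigl|(J-J_n)[u]\bigr|,
\]
where $\F_L=\{u:\Lp(u)\le L\}$ and $L$ is an a priori bound on $\Lp(u_n)$. Because $\lambda\Lp(u_n)\le J_n[u_n]\le J_n[u^\lambda]$, and Hoeffding applied to the single bounded random variable $\ell(u^\lambda(x),y(x))$ gives $J_n[u^\lambda]\le J[u^\lambda]+C\sqrt{\log(1/\delta)/n}$ with probability $1-\delta$, one can take $L=2J[u^\lambda]/\lambda$ once $n$ is large (absorbing the Hoeffding error into the final bound).

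The main technical step is controlling $\sup_{\F_L}|(J-J_n)[u]|$. I would build an $\epsilon$-net $\mathcal{N}_\epsilon$ of $\F_L$ in $L^\infty(\M)$ with $\log|\mathcal{N}_\epsilon|\le c(L/\epsilon)^m$ via the Kolmogorov–Tikhomirov entropy estimate for Lipschitz functions on the $m$-dimensional manifold $\M$. Applying Hoeffding to each fixed net point (using Lipschitzness and boundedness of $\ell$), union-bounding, and approximating an arbitrary $u\in\F_L$ by its nearest net point at cost $\Lp(\ell)\epsilon$ gives, with probability $1-\delta$,
\[
\sup_{u\in\F_L}\bigl|(J-J_n)[u]\bigr|\;\le\; C\Lp(\ell)\epsilon+C\sqrt{\tfrac{c(L/\epsilon)^m+\log(1/\delta)}{n}}.
\]
Choosing $\epsilon\asymp(t\log n/n)^{1/(m+2)}$ and setting $\log(1/\delta)\asymp ct\log n$ balances the two terms and produces a bound of order $L(t\log n/n)^{1/(m+2)}$; substituting $L\le 2J[u^\lambda]/\lambda$ into the first display yields the claimed inequality, and tracking the $t$-dependence of the optimal $\epsilon$ and the union-bound size produces the precise probability $1-2t^{-m/(m+2)}n^{-(ct-1)}$.

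The hard part is this covering-number concentration step. One must restrict the covering carefully to the data manifold so that the entropy carries the intrinsic dimension $m$ rather than the ambient dimension $d$, and one must tune $\epsilon$ and $\delta$ simultaneously to extract both the $1/(m+2)$ exponent in the rate and the stated form of the probability. The manifold assumption together with the possibility that the label function $y(x)$ is merely measurable (not Lipschitz) precludes a simpler Kantorovich–Rubinstein bound in terms of $W_1(\rho_n,\rho)$ and forces this $\epsilon$-net detour.
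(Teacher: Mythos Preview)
Your overall scheme is correct and the reduction from $\tfrac{\theta}{2}\int_\M\|u_n-u^\lambda\|_Y^2\rho\,dVol$ to the excess risk $J[u_n]-J[u^\lambda]$ via strong convexity mirrors the paper's Lemma~\ref{lem:mins} (the paper sends $t\to 1$ rather than taking the midpoint, which removes your extra factor of~$2$, but this is absorbed into $C$). The a~priori Lipschitz bound on $u_n$ and the excess--risk decomposition through $J_n[u_n]\le J_n[u^\lambda]$ are likewise the same as in the paper's argument.

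The genuine difference is the uniform concentration step. The paper does \emph{not} cover the function class: its Lemma~\ref{lem:discrepancy} instead partitions the manifold into cells of scale~$h$, applies Bernstein's inequality to the binomial cell counts~$Z_j$, and uses the Lipschitz property only to bound the oscillation within a cell by $CLh$; choosing $nh^{m+2}=t\log n$ then yields both the rate $CL(t\log n/n)^{1/(m+2)}$ and the specific probability $1-2t^{-m/(m+2)}n^{-(ct-1)}$ in one stroke (the polynomial factor is the number of cells $h^{-m}$). Your Kolmogorov--Tikhomirov covering argument is a legitimate alternative and, as you observe, has the advantage of not requiring the label map $x\mapsto y(x)$ to be Lipschitz, since Hoeffding is applied to the bounded scalar $\ell(u(x_i),y_i)$ for each fixed net point; the paper's partition route implicitly needs $x\mapsto \ell(u(x),y(x))$ to be Lipschitz when it invokes Lemma~\ref{lem:discrepancy}. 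On the other hand, your assertion that the covering route reproduces the \emph{precise} stated probability is optimistic: the union bound over $\exp\bigl(c(L/\epsilon)^m\bigr)$ net points with your choice of~$\epsilon$ produces a different (typically much smaller) failure probability, and the optimized prefactor comes out as $L^{m/(m+2)}$ rather than~$L$. Both discrepancies are harmless for the theorem as stated (the probability bound is only a lower bound, and $L^{m/(m+2)}\le CL$ for $L$ bounded below), but matching the paper's exact constants would require the partition argument rather than covering.
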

In Theorem \ref{thm:gen2} and Theorem \ref{thm:gen3}, the sequence $u_n$ does not, in general, converge on the whole domain $X$. The important point is that the sequence converges on the data manifold $\M$, and solves the variational problem \eqref{Main_Problem} off of the manifold, which provides regularization away from the training data.

Finally, due to the uniform Lipschitzness of the sequence $u_n$, which is established in the proof of Theorem \ref{thm:gen3}, we can upgrade the $L^2$ convergence rate from Theorem \ref{thm:gen3} to an $L^\infty$-rate.
\begin{corollary}\label{cor:Linfinity}
Suppose that $\ell:Y\times Y\to \R$ is Lipschitz and strongly convex. Then for any $t>0$, with probability at least $1-2t^{-\frac{m}{m+2}}n^{-(ct - 1)}$ all minimizing sequences $u_n$ of \eqref{Main_Problem_Data_Intro} satisfy
\begin{equation}\label{eq:interp1}
\|u_n-u^\lambda\|_{L^\infty(\M,Y)}^{m+2} \leq Cm^2L_n^m\lambda^{-1}J[u^\lambda]\left( \frac{t\log(n)}{n} \right)^{\frac{1}{m+2}},
\end{equation}
when $\|u_n-u^\lambda\|_{L^\infty(\M,Y)}\leq L_nr$, and 
\begin{equation}\label{eq:interp2}
\|u_n-u^\lambda\|^2_{L^\infty(\M,Y)}\leq Cm2^mr^{-m}\lambda^{-1}J[u^\lambda]\left( \frac{t\log(n)}{n} \right)^{\frac{1}{m+2}},
\end{equation}
when $\|u_n-u^\lambda\|_{L^\infty(\M,Y)}\geq L_nr$, where $L_n = \Lp(u_n-u^\lambda)$ and $r$ depends only on $\M$.
\end{corollary}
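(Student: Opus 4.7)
The plan is to convert the weighted $L^2$ bound from Theorem \ref{thm:gen3} to an $L^\infty$ bound by the standard interpolation inequality for Lipschitz functions on a compact smooth manifold. Concretely, for an $L$-Lipschitz map $f:\M\to Y$,
\[
  \|f\|_{L^\infty(\M;Y)}^{m+2}\,\lesssim\,L^m\,\|f\|_{L^2(\M;Y,\rho\,dVol)}^2,
\]
provided $\|f\|_{L^\infty}/L$ stays below the reach of $\M$; otherwise a coarser bound with $r^{-m}$ in place of $L^{-m}$ holds. Applied with $f = u_n - u^\lambda$ and the $L^2$ bound of Theorem \ref{thm:gen3}, this will give both \eqref{eq:interp1} and \eqref{eq:interp2}.

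First I would record a uniform Lipschitz bound on $f_n := u_n - u^\lambda$. Because $u_n$ minimizes \eqref{Main_Problem_Data_Intro} and $\ell$ is nonnegative, comparing against a constant function $\bar y$ gives $\lambda\,\Lp(u_n) \leq J_n[u_n] \leq J_n[\bar y] \leq C$, and similarly $\lambda\,\Lp(u^\lambda) \leq C$. Hence $L_n := \Lp(u_n - u^\lambda) \leq 2C/\lambda$, uniformly in $n$; this is the Lipschitz bound alluded to in the corollary statement.

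For the interpolation, let $M_n := \|f_n\|_{L^\infty(\M;Y)}$ and pick $x_0 \in \M$ with $\|f_n(x_0)\|_Y \geq (1-\delta)M_n$, letting $\delta \to 0$ at the end. $L_n$-Lipschitz continuity in the ambient metric gives $\|f_n(x)\|_Y \geq M_n/2$ for all $x \in \M \cap B(x_0, M_n/(2L_n))$. Compactness and smoothness of the embedded manifold produce an $r>0$ depending only on $\M$ and a dimensional constant $c_m$ with
\[
  \mathrm{Vol}\!\bigl(\M\cap B(x_0,\sigma)\bigr) \geq c_m\,\sigma^m \qquad \text{for all } x_0\in\M,\ \sigma \leq r.
\]
Writing $\sigma_n := \min\{M_n/(2L_n),\,r\}$ and using $\inf_\M \rho > 0$,
\[
  \int_\M \|f_n\|_Y^2\,\rho\,dVol \;\geq\; \tfrac{1}{4}\,(\inf_\M \rho)\,c_m\,M_n^2\,\sigma_n^m.
\]

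The two cases of the corollary correspond to which argument of the minimum in $\sigma_n$ is active. When $M_n \leq L_n r$ the active radius is $M_n/(2L_n)$ and the right-hand side is $\asymp M_n^{m+2}/L_n^m$, which combined with Theorem \ref{thm:gen3} yields \eqref{eq:interp1}. When $M_n \geq L_n r$ the active radius is $r$ and the right-hand side is $\asymp r^m M_n^2$, giving \eqref{eq:interp2}; the $2^m r^{-m}$ factor in \eqref{eq:interp2} and the $m^2$ factor in \eqref{eq:interp1} get absorbed into $c_m$ and the halving constant in the Lipschitz step. The main obstacle is the uniform volume comparison $\mathrm{Vol}(\M\cap B(x_0,\sigma)) \geq c_m \sigma^m$, which is standard for smooth compact submanifolds but must be set up carefully to track the dimensional constants that appear in the final bounds; all remaining steps are algebra and direct substitution of Theorem \ref{thm:gen3}.
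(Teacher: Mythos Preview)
Your approach is correct and matches the paper's: the corollary is obtained by combining Theorem~\ref{thm:gen3} with the standard $L^\infty$--$L^2$ interpolation inequality for Lipschitz functions on $\M$, which the paper records as Theorem~\ref{thm:interpolation} in the appendix. The only refinement there over your sketch is that, rather than the half-max bound $\|f_n\|_Y\ge M_n/2$ on the ball of radius $M_n/(2L_n)$, the paper integrates the full Lipschitz cone $(K-L|v|)_+^2$ in exponential coordinates, which produces the exact dimensional prefactors $(m+2)(m+1)/\alpha(m)\asymp m^2$ and $2^{m+1}(m+1)/(\alpha(m)r^m)\asymp m\,2^m r^{-m}$ appearing in \eqref{eq:interp1}--\eqref{eq:interp2}; your cruder step would yield a factor $2^m$ rather than $m^2$ in the first regime.
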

The proof of Corollary \ref{cor:Linfinity} follows from Theorem \ref{thm:gen3} and a standard interpolation inequality, which we recall for completeness in Theorem \ref{thm:interpolation} in \S\ref{sec:interpolation}.

We now turn to the proofs of Theorems \ref{thm:gen2} and \ref{thm:gen3}, which require some additional notation and preliminary results. Throughout this section, we write
\[
\LL[u,\mu] := \E_{x\sim \mu}[\ell(u(x),y(x))] =\int_{X} \ell(u(x),y(x)) d\mu(x) 	
\]
where $\mu$ is a probability measure supported on $X$.  In particular, write   $\rho_n := \frac 1 n \sum \delta_{x_i}$ for the empirical measure corresponding to~${\DD}_n$, so that  $\LL[u,\rho_n]$  is the expected loss on ${\DD}_n$. We also let $C,c>0$ denote positive constants depending only on $\M$, and we assume $C\geq 1$ and $0< c <1$. We follow the analysis tradition of allowing the particular values of $C$ and $c$ to change from line to line, to avoid keeping track of too many constants.  Finally, we Let $H_L(X;Y)$ denote the collection of $L$-Lipschitz functions $w:X\to Y$.

The proofs of of Theorems \ref{thm:gen2} and \ref{thm:gen3} require a discrepancy result.
\begin{lemma}\label{lem:discrepancy}
Suppose that  $\inf_\M \rho > 0$, and $dim(\M) = m$. Then for any $t>0$
\begin{equation}\label{eq:discrepancy}
\sup_{w\in H_L(X;Y)}\left|\frac{1}{n}\sum_{i=1}^n w(x_i) - \int_\M w \rho \, dVol(x)\right|\leq CL\left( \frac{t\log(n)}{n} \right)^{\frac{1}{m+2}}
\end{equation}
holds with probability at least $1-2t^{-\frac{m}{m +2}}n^{-(ct - 1)}$.
\end{lemma}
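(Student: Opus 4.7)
My plan is as follows. The strategy will be classical: approximate any $L$-Lipschitz $w$ by a step function on a sufficiently fine partition of $\M$, thereby reducing the supremum over $H_L(X;Y)$ to a concentration inequality for the empirical measure on finitely many sets. Since both $\rho_n = \frac{1}{n}\sum_{i=1}^n\delta_{x_i}$ and $\mu := \rho\,dVol$ are probability measures supported on $\M$, the quantity of interest is essentially the Kantorovich--Rubinstein discrepancy between them on the $m$-dimensional manifold $\M$, which is what produces the dimension $m$ in the exponent of the rate (rather than the ambient $d$).

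First, I would cover $\M$ by $N \leq C\delta^{-m}$ balls $B(z_k,\delta)$---possible because $\M$ is a compact $m$-manifold---and form a disjoint partition $\M = \bigsqcup_{k=1}^N A_k$ with $A_k \subset B(z_k,\delta)$. Set $p_k = \mu(A_k)$ and $\hat p_k = \rho_n(A_k)$. Since shifting $w$ by a constant does not change the discrepancy, I may assume $\|w(x)\|_Y \leq LD$ on $\M$, with $D = \mathrm{diam}(\M)$. The key decomposition would be
\begin{equation*}
\int_\M w\rho\,dVol - \frac{1}{n}\sum_{i=1}^n w(x_i) = \sum_{k=1}^N w(z_k)(p_k - \hat p_k) + \sum_k \int_{A_k}(w - w(z_k))\rho\,dVol - \frac{1}{n}\sum_k\sum_{x_i \in A_k}(w(x_i) - w(z_k)).
\end{equation*}
Lipschitzness immediately bounds each of the last two terms by $L\delta$ in $Y$-norm, while the first is at most $LD\sum_k |\hat p_k - p_k|$. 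The essential point is that this estimate depends on $w$ only through the Lipschitz constant $L$, so taking the supremum over $H_L(X;Y)$ is free once the good event on the sample is controlled.

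Next I would bound $\sum_k |\hat p_k - p_k|$ by concentration: each $n\hat p_k$ is $\mathrm{Binomial}(n,p_k)$, and Bernstein's inequality at scale $ct\log n$ combined with a union bound over the $N$ cells yields, with probability at least $1 - 2Nn^{-ct}$,
\begin{equation*}
\sum_{k=1}^N |\hat p_k - p_k| \leq \sqrt{\frac{2Nct\log n}{n}} + \frac{2Nct\log n}{3n},
\end{equation*}
where Cauchy--Schwarz applied to $\sum_k \sqrt{p_k} \leq \sqrt{N}$ is used in the first term. Choosing $\delta = (t\log n/n)^{1/(m+2)}$ so that $N \sim \delta^{-m}$ makes both Bernstein terms of order $\delta$, matching the discretization error; substituting $N \leq C t^{-m/(m+2)} n^{m/(m+2)}(\log n)^{-m/(m+2)}$ turns $2Nn^{-ct}$ into the claimed $2t^{-m/(m+2)} n^{-(ct-1)}$ once logarithmic and geometric constants are absorbed into $c$ and $C$.

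The main obstacle will be calibrating $\delta$ and the Bernstein scale simultaneously so that the discretization error, the concentration error, and the failure probability all end up in the specific form claimed. A naive estimate $\sum_k|\hat p_k - p_k| \leq N\cdot\max_k|\hat p_k - p_k|$ is too weak by a polynomial in $N$: the decisive step is the inequality $\sum_k\sqrt{p_k}\leq\sqrt{N}$, which exploits $\sum_k p_k = 1$ to produce $\sqrt{N}$ rather than $N$ in the leading term. With that identity and Bernstein in hand, the remaining work is bookkeeping of constants.
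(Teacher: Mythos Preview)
Your proposal is correct and follows essentially the same route as the paper: partition $\M$ at scale $\delta$ into $N\sim\delta^{-m}$ cells, use the Lipschitz bound to reduce to concentration of the empirical cell masses, apply Bernstein with a union bound over the $N$ cells, and balance by taking $\delta=(t\log n/n)^{1/(m+2)}$. The only cosmetic differences are that the paper first argues on the flat model $[0,1]^m$ with hypercubes of side $h$ and then lifts to $\M$ via the exponential map and a partition of unity, and in the concentration step it simply takes a uniform deviation $\varepsilon=h^{m+1}$ per cell (so that $N\varepsilon=h$ directly) rather than your $p_k$-adapted Bernstein bound combined with Cauchy--Schwarz on $\sum_k\sqrt{p_k}\le\sqrt{N}$; both variants yield the same exponent and the same failure probability after absorbing constants.
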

The proof of Lemma \ref{lem:discrepancy} is given in \S\ref{sec:app_proof}.  The estimate \eqref{eq:discrepancy} is called a discrepancy result \cite{talagrand2006generic,gyorfi2006distribution}, and is a uniform version of concentration inequalities.  The exponent $1/(m+2)$ is not optimal, but affords a very simple proof. It is possible to prove a similar result with the optimal exponent $1/m$ in dimension $m\geq 3$, but the proof is significantly more involved. We refer the reader to \cite{talagrand2006generic} for details.


%


We now give the proof of Theorem \ref{thm:gen2}.
\begin{proof}[Proof of Theorem \ref{thm:gen2}]
By Lemma \ref{lem:discrepancy} the event that
\begin{equation}\label{eq:disc}
\lim_{n\to\infty}\sup_{w\in H_L(X;Y)}\left|L[w,\rho_n]- L[w,\rho]\right|=0
\end{equation}
for all Lipschitz constants $L>0$ has probability one. For the rest of the proof we restrict ourselves to this event.

Let $u_n\in W^{1,\infty}(X;Y)$ be a sequence of minimizers of \eqref{Main_Problem_Data_Intro}, and let $u^\lambda\in W^{1,\infty}(X;Y)$ be any minimizer of \eqref{Main_Problem}. Then  since
\[
J^n[u_n]\leq J^n[u^\lambda] 
\] 
we have
\[ L[u_n,\rho_n] + \lambda \Lp(u_n) \leq + L[u^\lambda,\rho_n] + \lambda \Lp(u^\lambda). \]
 Write $L_0=\Lp(u^\lambda)$. Then  
\begin{equation}\label{eq:Lpbound}
\Lp(u_n) \leq L_0  + \lambda^{-1}(L[u^\lambda,\rho_n] - L[u_n,\rho_n]).
\end{equation}
In particular, $\Lp(u_n)$ is uniformly bounded in $n$. Since the label space $Y$ is compact, the sequence $u_n:X\to Y$ is uniformly bounded as well.

By the Arzel\`a-Ascoli Theorem \cite{rudin1976principles} there exists a subsequence $u_{n_j}$ and a function $u\in W^{1,\infty}(X;Y)$ such that $u_{n_j}\to u$ uniformly as $n_j\to \infty$. Note we also have $\Lp(u)\leq \liminf_{j\to\infty}\Lp(u_{n_j})$. Since
\begin{align*}
|L[u_n,\rho_n] - L[u,\rho]|&\leq |L[u_n,\rho_n] - L[u,\rho_n]| + |L[u,\rho_n] - L[u,\rho]|\\
&\leq C\|u_n-u\|_{L^\infty(\M;Y)} + \sup_{w\in H_L(X;Y)}\left|L[w,\rho_n]- L[w,\rho]\right|
\end{align*}
it follows from \eqref{eq:disc} that $L[u_{n_j},\rho_{n_j}] \to L[u,\rho]$ as $j\to \infty$. It also follows from \eqref{eq:disc} that $J^n[u^\lambda] \to J[u^\lambda]$ as $n\to \infty$. Therefore
\begin{align*}
J[u^\lambda]&=\lim_{n\to\infty}J^n[u^\lambda]\\
&\geq \liminf_{n\to\infty}J^n[u_n]\\
&= \liminf_{n\to\infty} L[u_n,\rho_n] + \lambda \Lp(u_n) \\
&= \lim_{n\to\infty} L[u_n,\rho_n] + \liminf_{n\to \infty}\lambda \Lp(u_n) \\
&\geq L[u,\rho] + \lambda \Lp(u) = J[u].
\end{align*}
Therefore, $u$ is a minimizer of $J$. By Lemma \ref{lem:unique}, $u=u^\lambda$ on $\M$, and so $u_{n_j} \to u^\lambda$ uniformly on $\M$ as $j\to \infty$.

Now, suppose that \eqref{eq:convM} does not hold. Then there exists a subsequence $u_{n_j}$ and $\delta>0$ such that
\[\max_{x\in \M}|u_{n_j}(x) - u^\lambda(x)|> \delta\]
for all $j\geq 1$. However, we can apply the argument above to extract a further subsequence of $u_{n_j}$ that converges uniformly on $\M$ to $u^\lambda$, which is a contradiction. This completes the proof.
\end{proof}

The proof of Theorem \ref{thm:gen3} requires some additional preliminary setup.
\begin{proposition}\label{prop:sconvex}
If $\ell$ is strongly convex with parameter $\theta>0$ then
\begin{equation}\label{eq:Jconvex}
J[tu_1 + (1-t)u_2] + \tfrac{\theta}{2}t(1-t)\int_\M \|u_1-u_2\|_Y^2\rho \, dVol(x) \leq tJ[u_1] + (1-t)J[u_2]
\end{equation}
for all $u_1,u_2\in W^{1,\infty}(X;Y)$ and $0 \leq t \leq 1$.
\end{proposition}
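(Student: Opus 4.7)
The plan is to establish the inequality by decomposing $J$ into its two constituent pieces—the expected loss on the data manifold and the Lipschitz regularization term—and verifying the strong convexity inequality on the first piece while using the standard convexity (sublinearity) of the Lipschitz constant on the second.

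First I would apply the strong convexity hypothesis \eqref{eq:sconvex} pointwise in $x \in \M$, taking $y_0 = y(x)$, $y_1 = u_1(x)$, and $y_2 = u_2(x)$. This immediately yields
\[
\ell(tu_1(x) + (1-t)u_2(x), y(x)) + \tfrac{\theta}{2}t(1-t)\|u_1(x)-u_2(x)\|_Y^2 \leq t\,\ell(u_1(x),y(x)) + (1-t)\,\ell(u_2(x),y(x))
\]
for every $x \in \M$. Integrating both sides against the probability measure $\rho \, dVol(x)$ on $\M$ gives the strong-convexity version of the inequality for the expected loss term $L[u,\rho]$.

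Next I would handle the Lipschitz term. Since the Lipschitz constant is a seminorm on $W^{1,\infty}(X;Y)$ (as is immediate from the triangle inequality applied to $\|u(x)-u(y)\|_Y$ in the definition \eqref{eq:Lip}), we have
\[
\Lp(tu_1 + (1-t)u_2) \leq t\,\Lp(u_1) + (1-t)\,\Lp(u_2).
\]
Multiplying by $\lambda \geq 0$ and adding this to the integrated inequality from the previous step produces exactly \eqref{eq:Jconvex}.

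There is no real obstacle here: the only nontrivial input is the pointwise strong convexity of $\ell$, which is assumed, and sublinearity of $\Lp(\cdot)$, which is a standard fact about Lipschitz seminorms. The role of this proposition is simply to lift the quadratic lower bound coming from strong convexity of $\ell$ to a quadratic lower bound for $J$ in the $L^2(\M, \rho)$ metric, which is precisely the form needed to turn a functional-value comparison $J[u^\lambda] - J[u_n] \leq o(1)$ into an $L^2$ convergence rate in the proof of Theorem \ref{thm:gen3}.
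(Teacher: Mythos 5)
Your proof is correct and is essentially the same argument the paper uses, only spelled out in full: the paper's one-line "we compute" step implicitly applies strong convexity of $\ell$ pointwise and integrates over $\M$, and uses sublinearity of the Lipschitz seminorm for the regularization term, exactly as you do. No gap, no genuinely different route.
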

\begin{proof}
We compute
\begin{align*}
J[tu_1 + (1-t)u_2]&=\int_\M \ell(tu_1 + (1-t)u_2,y)\rho\, dVol(x) + \lambda   \Lp(tu_1 + (1-t)u_2)\\
&\leq tJ[u_1] + (1-t)J[u_2] - \frac{\theta}{2}t(1-t)\int_\M \|u_1-u_2\|_Y^2\rho \, dVol(x),
\end{align*}
which completes the proof.
\end{proof}
\begin{lemma}\label{lem:mins}
If $u^\lambda\in W^{1,\infty}(X;Y)$ is a minimizer of \eqref{Main_Problem} and $u\in W^{1,\infty}(X;Y)$ then
\begin{equation}\label{eq:minrate}
\frac{\theta}{2}\int_\M \|u - u^\lambda\|_Y^2\rho \, dVol(x)\leq J[u] - J[u^\lambda].
\end{equation}
\end{lemma}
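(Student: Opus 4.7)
The plan is to reduce the claim to the strong-convexity inequality provided by Proposition \ref{prop:sconvex}, applied along the line segment joining $u^\lambda$ and a competitor $u$. Concretely, for $t\in(0,1]$ set $u_t := tu + (1-t)u^\lambda$, which lies in $W^{1,\infty}(X;Y)$ since the space is a vector space. Proposition \ref{prop:sconvex} then yields
\[
J[u_t] + \tfrac{\theta}{2}\,t(1-t)\int_\M \|u-u^\lambda\|_Y^2\rho\, dVol(x) \leq tJ[u] + (1-t)J[u^\lambda].
\]

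Next, I would invoke the hypothesis that $u^\lambda$ is a minimizer of $J$ to conclude $J[u_t]\geq J[u^\lambda]$. Substituting this into the previous display and subtracting $(1-t)J[u^\lambda]$ from both sides gives
\[
\tfrac{\theta}{2}\,t(1-t)\int_\M \|u-u^\lambda\|_Y^2\rho\, dVol(x) \leq t\bigl(J[u]-J[u^\lambda]\bigr).
\]
Dividing by $t>0$ and then letting $t\downarrow 0$ produces the claimed inequality, since the factor $(1-t)$ tends to $1$ and the right-hand side is independent of $t$.

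There is essentially no obstacle here: both the convex combination and the passage to the limit are elementary, and the only structural ingredient is strong convexity of the loss (hence of $J$ in the integral part) together with the convexity of the Lipschitz-constant term, which is already packaged into Proposition \ref{prop:sconvex}. If one wanted to avoid the limit step, an equivalent path is to specialize Proposition \ref{prop:sconvex} directly at $t=1/2$ with the pair $(u, u^\lambda)$ and then use $J[(u+u^\lambda)/2]\geq J[u^\lambda]$; this gives the same conclusion up to the factor $t(1-t)=1/4$, which can be removed by iterating or, more cleanly, by the $t\downarrow 0$ argument above that yields the sharp constant $\theta/2$ needed for Theorem \ref{thm:gen3}.
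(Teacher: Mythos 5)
Your argument is correct and is essentially the same as the paper's: the paper applies Proposition \ref{prop:sconvex} with the roles of $u$ and $u^\lambda$ swapped (so it sends $t\to 1$ after dividing by $(1-t)$ instead of $t\to 0$ after dividing by $t$), but this is just a relabeling $t\leftrightarrow 1-t$. The structural content — strong convexity of $J$ plus minimality of $u^\lambda$ along the connecting segment, followed by passage to the endpoint limit — is identical.
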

\begin{proof}
We use Proposition \ref{prop:sconvex} with $u_1=u^\lambda$ and $u_2=u$ to obtain
\[J[tu^\lambda + (1-t)u] + \frac{\theta}{2}t(1-t)\int_\M \|u^\lambda-u\|_Y^2\rho \, dVol(x) \leq tJ[u^\lambda] + (1-t)J[u].\]
Since $J[tu^\lambda + (1-t)u] \geq J[u^\lambda]$ 
\[J[u^\lambda] + \frac{\theta}{2}t(1-t)\int_\M \|u^\lambda-u\|_Y^2\rho \, dVol(x) \leq tJ[u^\lambda] + (1-t)J[u],\]
and so
\[\frac{\theta}{2}t\int_\M \|u^\lambda-u\|_Y^2\rho \, dVol(x) \leq J[u] - J[u^\lambda].\]
Setting $t=1$ completes the proof.
\end{proof}

We now give the proof of Theorem \ref{thm:gen3}.
\begin{proof}[Proof of Theorem \ref{thm:gen3}]
By Lemma \ref{lem:discrepancy}
\begin{equation}\label{eq:disc2}
\sup_{w\in H_{L_0}(X;Y)}\left|L[w,\rho_n]- L[w,\rho]\right|\leq CL_0\left( \frac{t\log(n)}{n} \right)^{\frac{1}{m+2}}
\end{equation}
holds with probability at least $1-2t^{-\frac{m}{m+2}}n^{-(ct - 1)}$ for any $t>0$. Let us assume for the rest of the proof that \eqref{eq:disc2} holds.

By \eqref{eq:Lpbound} in the proof of Theorem \ref{thm:gen2}, we have 
\[\Lp(u_n)\leq L_0 + D_n,\]
where $D_n=\lambda^{-1}(L[u^\lambda,\rho_n] - L[u_n,\rho_n])$. By \eqref{eq:disc2} we have
\[D_n \leq \lambda^{-1}L[u^\lambda,\rho] + CL_0\left( \frac{t\log(n)}{n} \right)^{\frac{1}{m+2}}.\]
Therefore
\begin{equation}\label{eq:un}
\left|J^n[u^\lambda]- J[u^\lambda]\right|,\left|J^n[u_n]- J[u_n]\right|\leq C(L_0+\lambda^{-1}L[u^\lambda,\rho])\left( \frac{t\log(n)}{n} \right)^{\frac{1}{m+2}},
\end{equation}
and since $L_0+\lambda^{-1}L[u^\lambda,\rho]=\lambda^{-1}J[u^\lambda]$ we have
\begin{align}
J[u_n] - J[u^\lambda]&=J^n[u_n] - J[u^\lambda] + J[u_n] - J^n[u_n] \leq  C\lambda^{-1}J[u^\lambda]\left( \frac{t\log(n)}{n} \right)^{\frac{1}{m+2}}.
\end{align}
By Lemma \ref{lem:mins} we deduce
\[\frac{\theta}{2}\int_\M \|u_n - u^\lambda\|_Y^2\rho \, dVol(x)\leq C\lambda^{-1}J[u^\lambda]\left( \frac{t\log(n)}{n} \right)^{\frac{1}{m+2}},\]
which completes the proof.
\end{proof}

%
%

\newcolumntype{R}[1]{>{\raggedleft\let\newline\\\arraybackslash\hspace{0pt}}m{#1}}
\newcolumntype{C}[1]{>{\center\let\newline\\\arraybackslash\hspace{0pt}}m{#1}}

\section{Adversarial robustness results}\label{sec:results}
\begin{figure}
    \centering
    \includegraphics[height=2in]{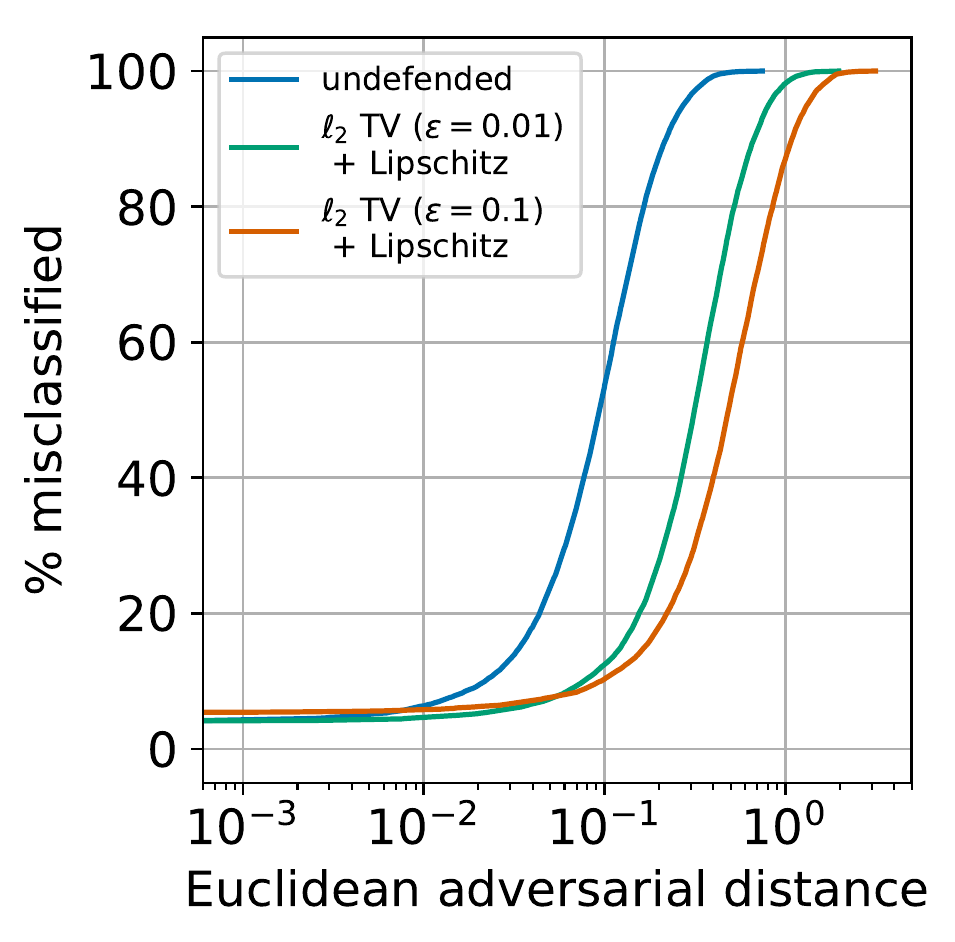}
    \caption{Adversarial robustness results against $\ell_2$ PGD attacks using
    a ResNeXt architecture on the CIFAR-10 test set. The probability of a
  successful attack is plotted against distance in $\ell_2$, for an undefended
network and two defended networks.}
  \label{fig:stability-curves}
\end{figure}


\begin{table*}
  \caption{Comparison with other defence methods on CIFAR-10. Misclassification
    error (percentage) is reported on
    test images.  Each row corresponds to an adversarial attack method. }
  \label{tab:compare_methods}
   \centering
{
  \begin{tabular}{l l l || c|c|c| c| c}
    \multicolumn{3}{c}{}&\multicolumn{4}{c}{Defence method} \\
    \cmidrule(lr){4-8}

    \multirow{3}{*}{Attack}&\multicolumn{2}{c||}{\multirow{3}{*}{details}} &
    \multicolumn{3}{c|}{$\ell_2$ TV + Lipschitz} &
    \multirow{3}{*}{Madry}&\multirow{3}{*}{Qian} \\
    & & &  $\varepsilon=0.01$, & $\varepsilon=0.1$, & $\varepsilon=0.1$, & & \\
    & & &  $\lambda=0.1$       & $\lambda=0.1$      & $\lambda=1$       & & \\
    \hline\hline
                          \multirow{2}{*}{test error} &
                          \multicolumn{2}{c||}{undefended model}  &
                          \textbf{4.1}& \textbf{4.1}&\textbf{4.1} & 4.8 & 5.0  \\
                          & \multicolumn{2}{c||}{ defended model}
                          &\textbf{4.1} & 5.4 &   6.0   & 12.8         &
                          22.8          \\ \hline
                          $\ell_2$ PGD & \multirow{3}{*}{distance} & $\|\varepsilon\|_2 = 100/255$                                                          
                                              & 59.8
                                              &37.2 &\textbf{36.1}              & $>90$              & -               \\ 
CW & &$\|\varepsilon\|_2 = 1.5$                & 90.8                        &
91.2    & 84.4                  & -                  & \textbf{79.6}   \\ 
I-FGSM & &$\|\varepsilon\|_\infty ={8}/{255}$  & 98.1                        &
91.6 & 93.7                     & \textbf{54.2}      & -            \\ \hline
\end{tabular}
}
\end{table*}

\begin{table*}
  \caption{Adversarial statistics with ResNeXt-34 (2x32) on CIFAR-10 test data.}
  \centering
\resizebox{\textwidth}{!}{
  \begin{tabular}{ l||  r | r  r  r | r r  | r r  | r  r }
    
    \multicolumn{2}{c}{}& \multicolumn{3}{c}{adversarial distance 2-norm} &
    \multicolumn{2}{c}{adversarial distance $\infty$-norm} &
    \multicolumn{2}{c}{max test statistics} & \multicolumn{2}{c}{mean test statistics}\\
    \cmidrule(lr){3-5}\cmidrule{6-7} \cmidrule(lr){8-9} \cmidrule(l){10-11}
    
     \multirow{2}{*}{defence method} &
    \% Err at& median & median & \multicolumn{1}{c}{\% Err at} & median & \% Err at & 
\multirow{2}{*}{$\norm{\grad \ell}_2$}  &
    \multirow{2}{*}{ $\norm{\grad f}_{2,\infty}$} & \multirow{2}{*}{$\norm{\grad \ell}_2$}  &
    \multirow{2}{*}{ $\norm{\grad f}_{2,\infty}$} \\
     & $\varepsilon=0$& distance &time & $\varepsilon=0.1$ & 
     distance &  
    $\varepsilon=\frac{8}{255}$ & & \\
    \hline
    \hline
    undefended  & 4.07  & 0.09 & 159 & 53.98 & \num{2.7e-3} & 100  & 85.21
    & 13.70 & 1.90 & 0.37 \\
    $\ell_1$ TV (AT, FGSM)                   &3.87      &0.19 & 301       &23.26    &
    \num{5.6e-3} & 92.74  & 35.77 & 6.27 & 1.10 & 0.21 \\
    $\ell_2$ TV ($\varepsilon=0.01$)         & \textbf{3.58}  & 0.30 & 471 & 13.54 & \num{9.0e-3} &
    98.34 & 32.13 & 5.22 & 0.59 & 0.11 \\
    $\ell_2$ TV ($\varepsilon=0.01$) + Lipschitz ($\lambda=0.1$) & 4.13  &0.31 & 473  & 12.52 &\num{9.1e-3} & 98.10   &
     \textbf{4.10} & \textbf{2.14} & \textbf{0.55} & \textbf{0.10} \\
     $\ell_2$ TV ($\varepsilon=0.1$) + Lipschitz ($\lambda=0.1$) & 5.37  & 0.48 &
     659  &
     \textbf{10.31}  & \num{13.7e-3} & \textbf{91.6} & 31.96& 8.93  &1.19   & 0.47 \\
     $\ell_2$ TV ($\varepsilon=0.1$) + Lipschitz ($\lambda=1$) & 5.98  &
     \textbf{0.52} & \textbf{698}  &
     10.95  & \textbf{\num{14.7e-2}} & 93.7 & 18.53 & 4.87  &1.02   & 0.46 \\
    \hline
  \end{tabular}
}
  \label{tab:punch10}
\end{table*}

\begin{table*}
  \caption{Adversarial statistics with ResNeXt-34 (4x32) on CIFAR-100. Training with $\varepsilon = 0.01$ and $\lambda = 0.1$ }
  \centering
\resizebox{\textwidth}{!}{
  \begin{tabular}{ l||  r | r   r r | r r  | r r | r r}
    \multicolumn{2}{c}{}& \multicolumn{3}{c}{adversarial distance 2-norm} &
    \multicolumn{2}{c}{adversarial distance $\infty$-norm}&
    \multicolumn{2}{c}{max test statistics} & \multicolumn{2}{c}{mean test statistics} \\
    \cmidrule(lr){3-5}\cmidrule{6-7} \cmidrule(lr){8-9} \cmidrule(l){10-11} 
    
     \multirow{2}{*}{defence method} &
    \% Err at& median & median & \% Err at & median & \% Err at  &
\multirow{2}{*}{$\norm{\grad \ell}_2$}  &
    \multirow{2}{*}{ $\norm{\grad f}_{2,\infty}$} & \multirow{2}{*}{$\norm{\grad \ell}_2$}  &
    \multirow{2}{*}{ $\norm{\grad f}_{2,\infty}$} \\
    &  $\varepsilon=0$& distance &time& $\varepsilon=0.1$ &  distance &
    $\varepsilon=\frac{1}{255}$ & & \\
    \hline
    \hline
    undefended& \textbf{21.24} & \num{4.7e-2} & 152 &
    74.18  &\num{1.4e-3} & 81.52    & 93.83 & 1.89   & 5.19 & 0.14\\
    $\ell_1$ TV (AT, FGSM)                     & 22.06 & \num{8.8e-2} & 227 & 53.09  & \num{2.5e-3} &
    63.71 & 34.60 & 0.71 & 2.97 & 0.08 
    \\
    $\ell_2$ TV & 21.57 &\num{8.6e-2}& 253  & 53.77 & \num{2.6e-3} &
    61.72   & 44.81 & 0.73 & 2.86 & 0.08 \\
    $\ell_2$ TV + Lipschitz  & 21.73 & \textbf{\num{11.2e-2}} & \textbf{307}  & \textbf{46.79}  & \textbf{\num{3.2e-3}} &\textbf{55.92}   & \textbf{27.58} & \textbf{0.46} & \textbf{2.10} & \textbf{0.05} \\
    \hline
  \end{tabular}
}
  \label{tab:punch100}
\end{table*}

Here we present  adversarial robustness results for  models trained with the modified loss \eqref{main_equation}  on the CIFAR-10 and CIFAR-100 datasets
\cite{cifar_data}. We used the ResNeXt architecture \cite{resnext}. Models were trained with standard data augmentation and training
hyper-parameters for the CIFAR datasets, and cutout \cite{cutout}. 

Our  results are summarized in Tables \ref{tab:punch10} for CIFAR-10
and \ref{tab:punch100} for CIFAR-10. We implement four standard attacks using
the Foolbox library \cite{foolbox}: PGD in the 2-norm; Iterative FGSM; Boundary
attack; and the Carlini-Wagner attack. We found that when measured in the
2-norm, generally PGD was the most effective attack (on our reported models), whereas when measured in
the $\infty$-norm, Iterative FGSM was the most effective. See for example Figure~\ref{fig:attack-comparison}, where we plot error curves comparing the four attacks,
on both our best regularized model and an undefended model.  The error curve
measures the probability of successful attack given an adversarial distance. 

In Tables \ref{tab:punch10} and \ref{tab:punch100} we report test error on clean
images, and the error at adversarial perturbation $\varepsilon=0.1$ in the
2-norm (just visible to the human eye) and $\varepsilon=8/255$ in the
$\infty$-norm. We also report median adversarial distance (which corresponds to
the $x$-intercept of 50\% on the attack curves).
The median adversarial distance is a better overall characterization of adversarial robustness
than reporting values at one particular distance.

We also report  median time to
successful attack (as a ratio over one model call). The median time to attack is
a proxy for the \emph{cost} of a successful attacks. We make an analogy with
cryptography \cite{stallings2006cryptography}: there is no unbreakable code, but
there is a code which takes enormous effort to break.  Thus, as a secondary
goal, we can design models which are costly to attack.

\textbf{Comparison with other works.} On the CIFAR-10 dataset to our knowledge the current
state-of-the-art is \cite{madry_2017} (using adversarial training) for attacks
measured in the $\infty$-norm, and until recently was also the state-of-the-art
in the 2-norm. \cite{madry_2017} reports results at $\varepsilon = 8/255$ in the $\infty$-norm.  
Comparing with our training hyperparameters set to $\varepsilon = \lambda = 0.1$,  
our model achieved 91.6\% error, whereas Madry et al.~\cite{madry_2017} reports 54.2\% error. 
 In the 2-norm, our model achieves 37.2\% error at adversarial distance 100/255, whereas Madry et al.~\cite{madry_2017} reports roughly 90\% error at this distance. 
However, on clean images Madry et al.~\cite{madry_2017} reported 12.8\% test error,
whereas we obtain 5.4\% versus 4.1\% for the undefended model.
See Table~\ref{tab:compare_methods}. 

We also compare to the more recent work of Qian and Wegman \cite{qian2018} who report strong results 
for attacks measured in the 2-norm. 
Qian and Wegman \cite{qian2018} report results only at 2-norm of 1.5, which is a large value.
They obtain an error of  79.6\%, whereas we achieved around 90\% at this
distance. However, we note that our regularized model achieves roughly 5\% test
error, whereas Qian and Wegman \cite{qian2018} report nearly 23\% test error.  By comparison, at 23\% error, we had an adversarial robustness distance of 0.25.  Thus at some distance in between .25 and 1.5, the error curves cross, but this crossover point could not be obtained from their data.

\textbf{Defended models are still accurate.}
By choosing the hyper-parameters $\varepsilon$ and $\lambda$ (multipliers for
respectively the Total Variation and Lipschitz regularizers in \eqref{main_equation}) judiciously, we
are able to train robust models without significantly degrading test error.
For some architectures  we even observed improved test error.  For example,
on CIFAR-10, with ResNeXt-34 (2x32), the undefended
baseline model achieved 4.07\% misclassification error, whereas the regularized
model achieved 4.13\%. Another regularized model, which had nearly the same robustness, had better accuracy than the baseline, achieving 3.58\%  See Table~\ref{tab:punch10}.

On CIFAR-100, the undefended model achieved 21.2\% misclassification, while the
most heavily regularized model achieved 21.73\% error. 

Increasing the amount of regularization leads to worse test accuracy on clean
images, but improved adversarial robustness. For example in Figure
\ref{fig:stability-curves} we plot error curves for three models, with different
levels of regularization. The undefended network performs best on clean images,
but is not adversarially robust. Of the two regularized networks, the network
with strongest regularization ($\varepsilon=0.1$) has better robustness
properties than the network regularized with $\varepsilon=0.01$, with
respectively adversarial distances of 0.48 and 0.31. However, when measured by
test error, the ranking is reversed, with test error 5.37\% and 4.13\%
respectively. Thus a balance may struck depending on the desired outcome, with
test error degrading as the level of regularization is increased, consistent with \cite{tsipras2018robustness}. 

\textbf{Defended models are robust to attacks.}
Defended models have increased median adversarial distance and improved
adversarial robustness, as measured by adversarial test accuracy. Models with more
regularization leads to stronger adversarial defence, as shown in Tables
\ref{tab:punch10} and \ref{tab:punch100}.

\begin{figure*}
  \centering
  \subfigure[undefended model]{
    \includegraphics[height=2.75in]{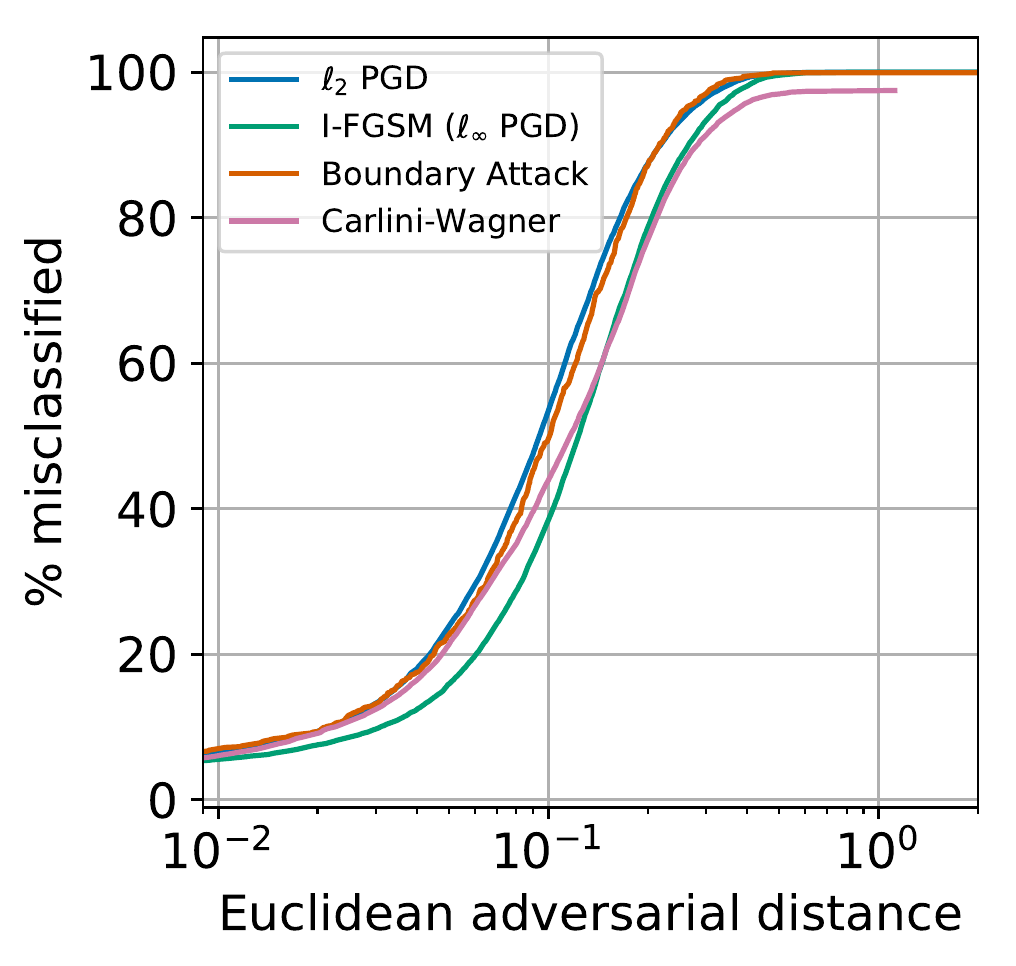}
  \label{fig:attack-l2}}
  \subfigure[best regularized model]{
    \includegraphics[height=2.75in]{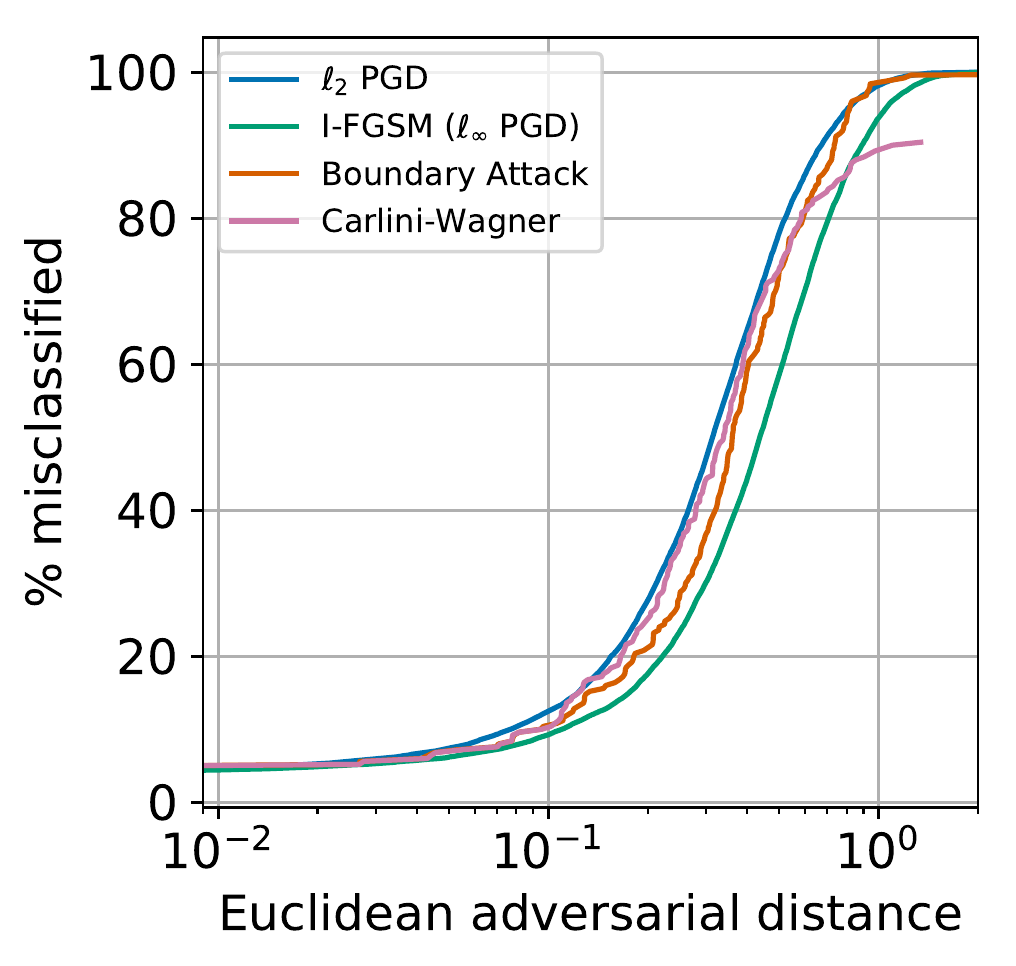}
  \label{fig:defended-l2}}
  \caption{Comparison of attack methods using error curves for
    ResNeXt-34 (2x32), on the CIFAR-10 test set. The error curve measures the
    probability of successful attack given an adversarial distance.
    \ref{fig:attack-l2} compares attacks on an undefended model;
  \ref{fig:defended-l2} compares attacks on our best regularized model.}
  \label{fig:attack-comparison}
\end{figure*}

\section{Robustness metrics}
Here we demonstrate that the measure of a model's robustness is given by the Total
Variation (average gradient norm) and the Lipschitz constant (maximum gradient)
norm of the model, or its loss.  This allows us to estimate improved robustness based on the improvement of the Total Variation and the Lipschitz constant of the models.   See Table~\ref{tab:relative}.

We estimated the Lipschitz constant of a model (or the loss of a model) using
\eqref{LipRad}.  This estimate  is based on sampling the norm of the gradient on
the test data.   This leads to an accurate and efficient method for estimating the Lipschitz
constant of the model, and the loss of a model. Our empirical estimate (see
Tables~\ref{tab:punch10} \& \ref{tab:punch100}), was usually close to one, and
was at most 14.  In contrast, by estimating the Lipschitz constant of a model as a
product of weight matrix norms, 
we obtained values between $10^{12}$ and $10^{23}$. 
We find that our estimate, \eqref{LipRad}, agrees
closely with the Lipschitz constant of the \emph{dataset}.
The Lipschitz constant of a dataset $\DD$ (in the $2,\infty$ norm) is the reciprocal of the minimum out-of-class $2$-norm distance.  Table~\ref{tab:lip-data} lists the Lipschitz constant of the training data for
common datasets. All datasets have Lipschitz constant close to or below one,
which is similar to the estimate of the Lipschitz constant of the model. 
\begin{table}
      \caption{Lipschitz constants in the ${2,\infty}$ norm, (equivalently $\min \|x_i - x_j\|$ for images in difference classes),  of training sets.  For 
CIFAR-100 we exclude duplicated images. }
  \centering
        \begin{tabular}{ l |l|l|l}
        \small MNIST &  {\small FashionMNIST} & {\small CIFAR-10} & {\small CIFAR-100}  \\
     \hline 
       0.417 & ~0.626 & ~0.364   & 1.245 \\
    \end{tabular}
    \label{tab:lip-data}
\end{table}

\begin{table}
\caption{Predicted and actual improvement of a robust regularized model over an
  undefended model for three measures of robustness. Prediction based on ratio of model Total Variation and model Lipschitz constant. Actual improvement based on ratio of median adversarial distance. }
  \centering
  \begin{tabular}{ l||   c |  c }
    Dataset & Predicted  & actual \\
    \hline
    CIFAR-10 & 3.4---6.4 & 3.4 \\
    CIFAR-100 & 2.8---4.1 & 2.4 \\
    \hline
  \end{tabular}
  \label{tab:relative}
\end{table}

We see that penalizing the Lipschitz constant of the \emph{loss} during model training reduces the
Lipschitz constant of the \emph{model} on the test data. For example, unregularized
models had Lipschitz constants of 13.7 and 1.89, on respectively CIFAR-10 and CIFAR-100. In
contrast, with Lipschitz regularization these constants reduced to to 2.14 and 0.46. Refer to Tables~\ref{tab:punch10} and~\ref{tab:punch100} for more details. 
This implies these models are more robust to
attack, which we observe reflected in the adversarial perturbation statistics.

A similar reduction is observed in the Total Variation of a model when models
are trained with Total Variation regularization. Tables \ref{tab:punch10} and
\ref{tab:punch100} demonstrate that the median
adversarial distance of a model is inversely proportional to the model's Total
Variation. We find that the improvement in median distance of a regularized model over an
undefended model corresponds to the relative improvement in the Total Variation
of the model.  See Table~\ref{tab:relative}.

\section{Attack detection} 
\begin{figure*}
  \centering
  \subfigure[Clean]{
    \includegraphics[height=1.4in]{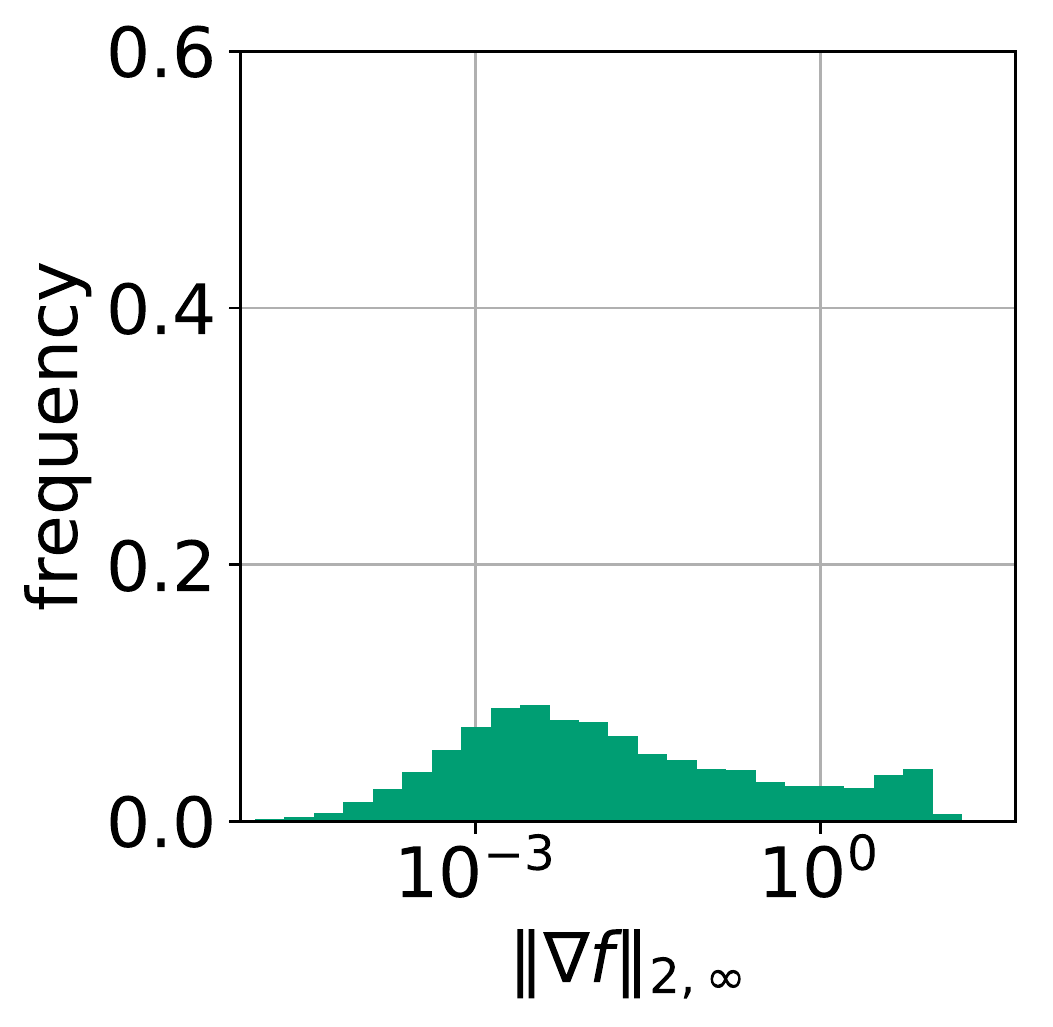}
  \label{fig:frq-clean}}
\subfigure[PGD attacked]{
  \includegraphics[height=1.4in]{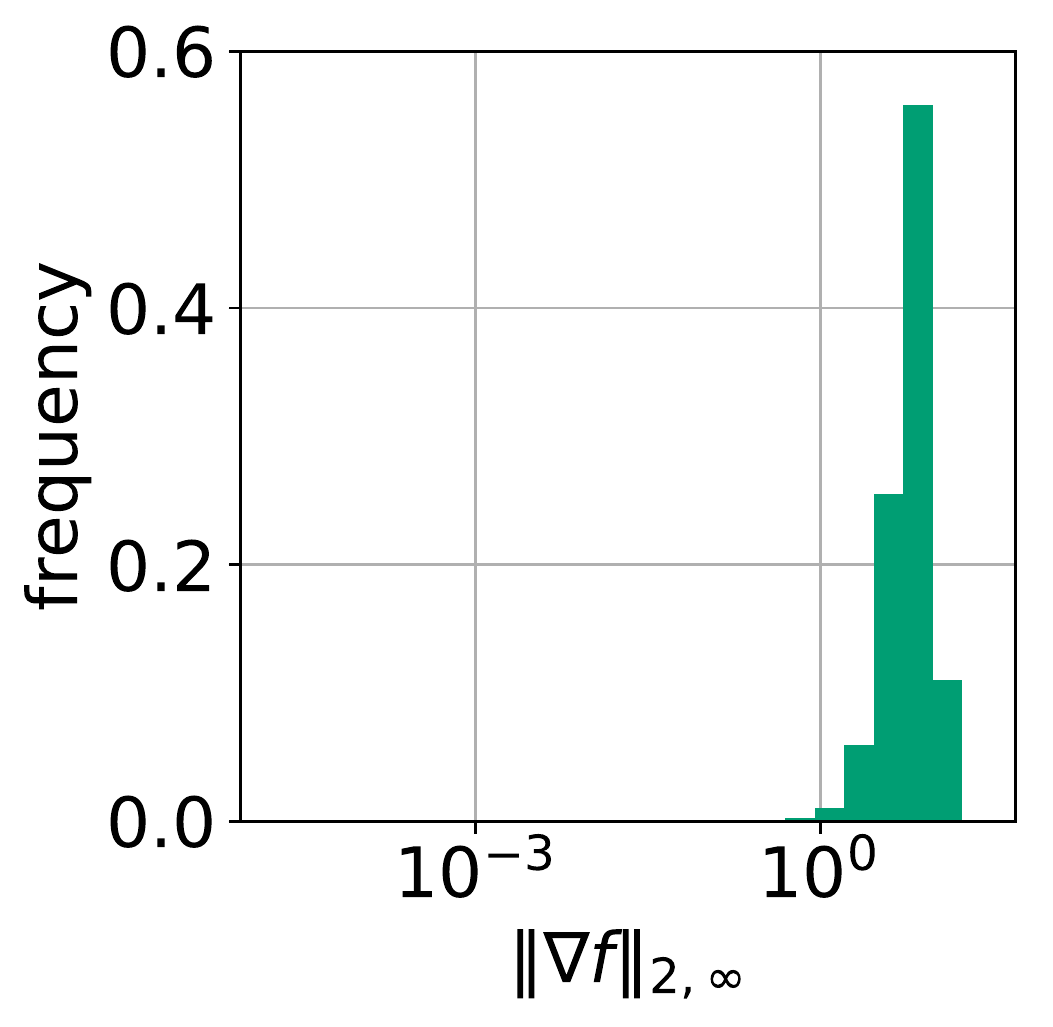}
  \label{fig:frq-pgd}}
\subfigure[Boundary attack]{
  \includegraphics[height=1.4in]{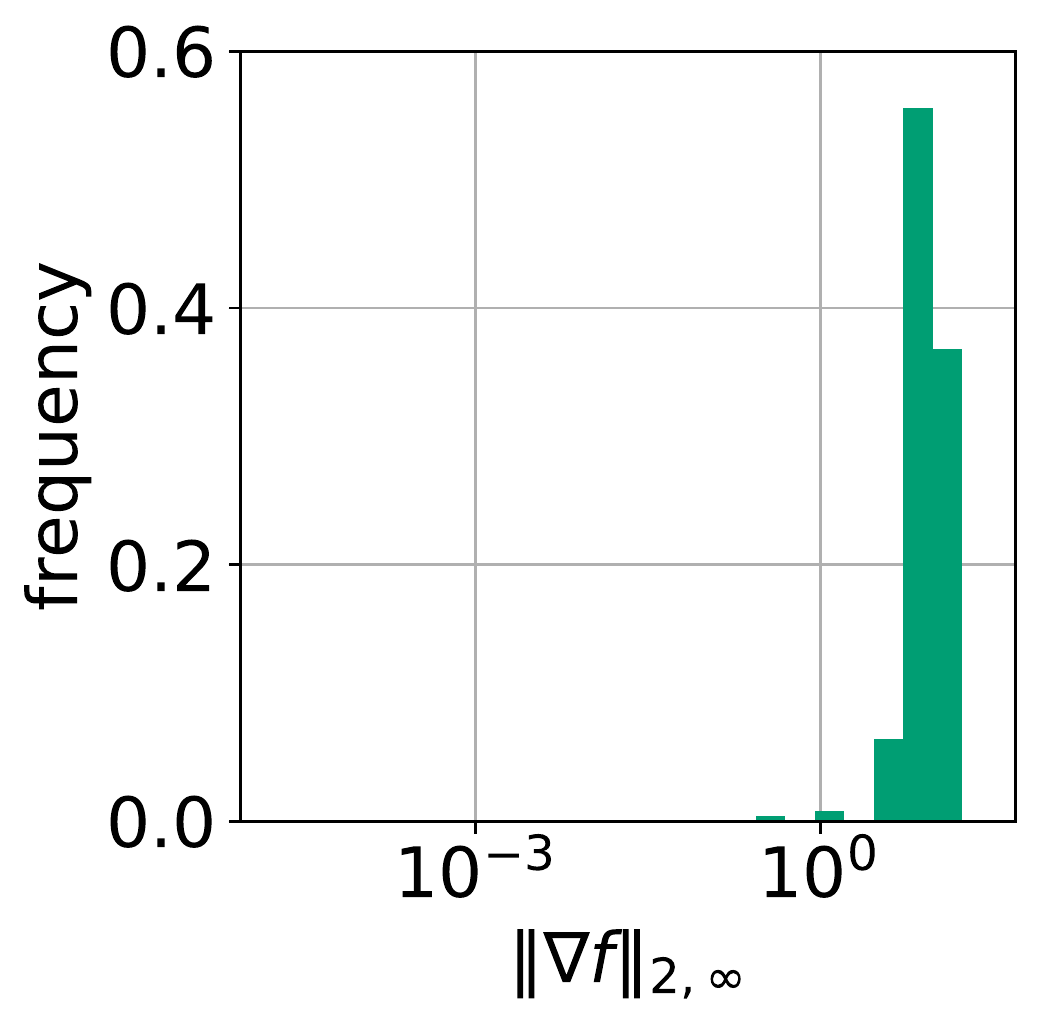}
  \label{fig:frq-bd}}
 \subfigure[evasive CW attack]{
  \includegraphics[height=1.4in]{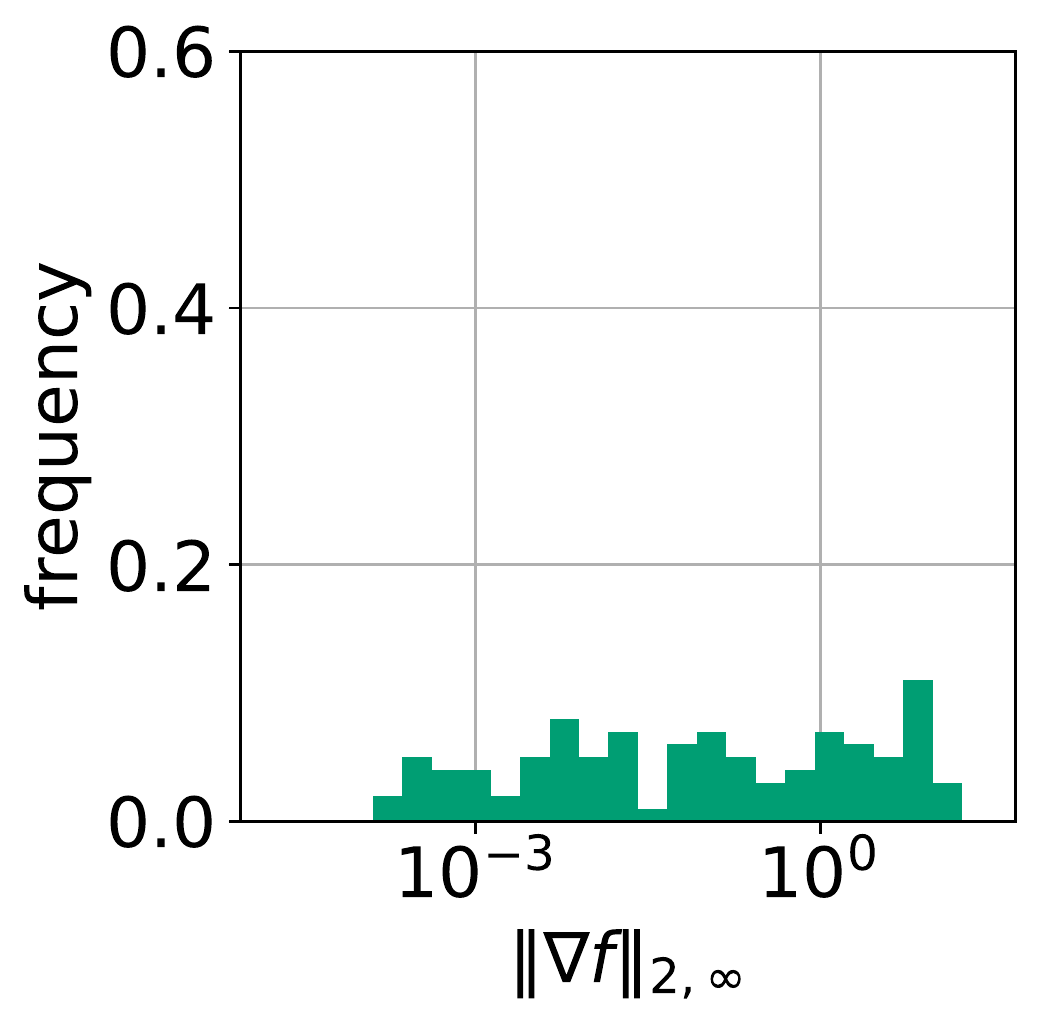}
  \label{fig:frq-lp}} 
  \caption{Frequency distribution of the norm of the model Jacobian
    $\norm{\nabla f(x)}_{2,\infty}$ on ResNeXt-34 (2x32)
    on CIFAR-10, using     
    \ref{fig:frq-clean}: Clean, \ref{fig:frq-pgd}: PGD attacked
    \ref{fig:frq-bd}: Boundary attacked, \ref{fig:frq-lp}: evasive-CW attacked 
    test images.
}
  \label{fig:detection}
\end{figure*}
In this section we empirically demonstrate that image vulnerability may also be
used to \emph{detect} adversarial examples.
We hypothesize that unless otherwise penalized, gradient based attacks will tend to move images to regions where the gradient of
the loss is large. Based on this heuristic, we propose the
norm of the loss gradient norm as criterion for detecting adversarial
perturbations.  Because the loss is not available during inference, we propose
using the norm of the model gradient as a rejection criteria:  an image has been
adversarially perturbed if
\begin{equation} 
  \| \nabla f(x)\|_{2,\infty} \geq c,
\end{equation}
for some threshold value, $c$.  The threshold is determined by setting the
significance level (the rate of false positives) to 5\%.   For example on
CIFAR-10 we obtained $c =2.45$ for our model.  The results are reported in
Table~\ref{tab:detection} and in Figure~\ref{fig:detection}. Only 6\% of clean
test images were rejected.  However, 100\% of Boundary attacks and Carlini-Wagner
attacks were detected, as well as 96\% of PGD attacked~images.   

This leads to the question, is it possible to successfully perturb all images in
the test set, and avoid detection?
We built a targeted attack, designed to avoid detection. Following the
recommendation of Athalye et al.~\cite{athalye2018obfuscated}, we use a Carlini-Wagner (CW) style
attack, modified with a penalty to avoid detection. We augmented the attack loss
function with a penalty for $\| \nabla \loss(x)\|^2_*$, which penalizes attacks
for being detectable. We call this attack an evasive Carlini-Wagner attack.
The evasive CW attack was successful at avoiding detection
78\% of the time, but in order to do so, it increased the median adversarial
distance significantly, from 0.31 to 0.81, see Table~\ref{tab:detection}.

\begin{table}
  \caption{Adversarial detection with ResNeXt-34 (2x32) on CIFAR-10. Clean
    images which the model correctly labels are perturbed until they are
    misclassified with four attack methods (PGD, Boundary attack,
    Carlini-Wagner, and an evasive Carlini-Wagner designed to avoid detection).
Images are rejected if $\norm{\nabla f(x)}_{2,\infty}>2.45$.}
  \centering
  \begin{tabular}{ p{0.27\linewidth}  |  r  r  r  r  r}    
    \multicolumn{1}{c}{}& \multicolumn{5}{c}{Image source} \\
    \cmidrule(l){2-6}
      &  clean & PGD & Boundary & CW & evasive CW\\ 
    \hline
    attack detected?  &	6\% &96\%  &100\% & 100\% & 22\% \\
    \hline
    median $\ell_2$  & -  & 0.31   & 0.36 & 0.34 & 0.81\\ 
    \hline
  \end{tabular}
  \label{tab:detection}
\end{table}




\appendix

\section{Interpolation inequality}
\label{sec:interpolation}

We show here how to estimate the $L^\infty$ norm of a function by its $L^2$ norm and Lipschitz constant.
\begin{theorem}[Interpolation inequalities]\label{thm:interpolation}
Let $u:X \to Y$ be Lipschitz continuous with Lipschitz constant $L$. There exists $r>0$ depending only on $\M$ such that when $\|u\|_{L^\infty(\M,Y)}\leq Lr$ we have
\begin{equation}\label{eq:interpolation1}
\|u\|_{L^\infty(\M,Y)}^{m+2} \leq \frac{(m+2)(m+1)}{\alpha(m)}L^m\|u\|_{L^2(\M,Y)}^2,
\end{equation}
and when $\|u\|_{L^\infty(\M,Y)}\geq Lr$ we have
\begin{equation}\label{eq:interpolation2}
\|u\|^2_{L^\infty(\M,Y)}\leq \frac{2^{m+1}(m+1)}{\alpha(m)r^m}\|u\|_{L^2(\M,Y)}^2,
\end{equation}
where $\alpha(m)$ is the volume of the unit ball in $\R^m$.
\end{theorem}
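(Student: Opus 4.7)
The plan is to exploit Lipschitz continuity near a point where $u$ achieves its $L^\infty$ norm. Since $\M$ is compact and $u$ is continuous, there is a point $x^* \in \M$ with $\|u(x^*)\|_Y = M$, where I write $M := \|u\|_{L^\infty(\M, Y)}$. Applying the reverse triangle inequality to the Lipschitz bound gives $\|u(x)\|_Y \geq M - L\,\|x - x^*\|$ for every $x \in X$, which is a nonnegative pointwise lower bound on the ball $\|x - x^*\| \leq M/L$. The strategy is then to integrate the square of this lower bound over a well-chosen ball around $x^*$ inside $\M$ and rearrange. The parameter $r > 0$ in the statement plays the role of the threshold radius (depending only on the geometry of $\M$) below which geodesic balls are essentially flat, i.e.\ $\mathrm{Vol}(B(x^*, \delta) \cap \M) \gtrsim \alpha(m)\delta^m$ for all $\delta \leq r$.

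I would split into the two stated regimes according to whether $M/L \leq r$ or $M/L \geq r$. In the small-norm regime $M \leq Lr$ the natural radius $M/L$ is already $\leq r$, so integrating $(M - L\,\|x - x^*\|)^2$ over $B(x^*, M/L) \cap \M$ in polar coordinates reduces, up to the volume comparison on $\M$, to $m\alpha(m)\int_0^{M/L}(M - Lt)^2\, t^{m-1}\,dt$. The substitution $s = Lt/M$ turns this into $(M^{m+2}/L^m)\,m\alpha(m)\,B(m,3)$, and since the Beta integral satisfies $B(m,3) = 2/(m(m+1)(m+2))$, the combinatorial factor $(m+1)(m+2)$ in \eqref{eq:interpolation1} drops out of this one-dimensional computation. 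In the large-norm regime $M \geq Lr$ the radius $M/L$ overshoots the region in which the manifold comparison is valid, so instead I would integrate on the smaller ball $B(x^*, r/2) \cap \M$. On that ball the Lipschitz lower bound degenerates to $\|u(x)\|_Y \geq M - Lr/2 \geq M/2$ (using $M \geq Lr$), so the $L^2$-integral is at least $(M/2)^2 \cdot \mathrm{Vol}(B(x^*, r/2) \cap \M)$, and the volume estimate then rearranges to \eqref{eq:interpolation2}.

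The main obstacle is quantifying the comparison between the induced Riemannian volume on the embedded submanifold $\M$ and the Euclidean volume of balls of the same radius in a tangent space: to pin down the explicit constants one needs to know that for $\delta \leq r$ the slice $B(x^*, \delta) \cap \M$ has volume comparable to $\alpha(m)\delta^m$, which is a standard second-fundamental-form estimate that holds once $r$ is chosen smaller than the reach or injectivity radius of $\M$. This single geometric input is exactly where the constant $r$ (depending only on $\M$) enters the statement. Apart from this, the remaining work is the reverse triangle inequality and the Beta-integral calculation, both routine.
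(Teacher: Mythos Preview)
Your proposal is correct and follows essentially the same approach as the paper: locate a maximum point, use the Lipschitz reverse-triangle bound $\|u(x)\|_Y \geq M - L\|x-x^*\|$, integrate its square over a small ball in $\M$, and invoke a volume comparison via the exponential map (which is precisely the ``standard second-fundamental-form estimate'' you flag) to reduce to a radial integral in $\R^m$. The paper makes the geometric step explicit by pulling back to $B(0,r)\subset T_{x_0}\M$ via $\exp_{x_0}$ with the Jacobian bound $J_{x_0}\geq \tfrac12$, and in the large-norm case it integrates $(K-L|v|)^2$ over all of $B(0,r)$ rather than using your cruder constant bound $M/2$ on $B(x^*,r/2)$; this accounts for the slightly different constant $2^{m+1}(m+1)$ in \eqref{eq:interpolation2} versus what your version would produce, but the arguments are otherwise the same.
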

\begin{proof}
We may assume, without loss of generality, that $Y=\R$. Let $x_0\in \M$ such that $|u|$ attains its max over $\M$ at $x_0$, and assume that $u(x_0)>0$.  Then for any $r>0$ sufficiently small, depending only on $\M$, the Riemannian exponential map $\exp_{x_0}:B(0,r)\subset T_{x_0}\M \to \M$ is a diffeomorphism between the ball $B(0,r)$ and the geodesic ball $B_\M(x_0,r)$, the Jacobian of $\exp_{x_0}$, denoted $J_{x_0}(v)$ satisfies $J_{x_0}(v)\geq \frac{1}{2}$ for all $|v|\leq r$, and the geodesic distance $d(x,x_0$ satisfies $d(x,x_0)\geq |x-x_0|$. Writing $K=\|u\|_{L^\infty(\M,Y)}=u(x_0)$, we we have
\begin{align*}
\|u\|^2_{L^2(\M,Y)}&\geq \int_{B_\M(x_0,r)} u(x)^2 \, dVol(x) \\
&\geq  \int_{B_\M(x_0,r)} (K - L|x-x_0|)_+^2 \, dVol(x) \\
&\geq \int_{B_\M(x_0,r)} (K - L d(x,x_0))_+^2 \, dVol(x) \\
&= \int_{B(0,r)\subset T_{x_0}\M} (K - L |v|)_+^2 J_{x_0}(v) \, dv, \\
&\geq \frac{1}{2}\int_{B(0,r)} (K - L |v|)_+^2 \, dv.
\end{align*}
If $r \geq K/L$ then we can compute
\[\int_{B(0,r)} (K - L |v|)_+^2 \, dv =\frac{2\alpha(m)}{(m+2)(m+1)}\frac{K^{m+2}}{L^m}.\]
This yields the estimate
\[\|u\|_{L^\infty(\M,Y)}^{m+2} \leq \frac{(m+2)(m+1)}{\alpha(m)}L^m\|u\|_{L^2(\M,Y)}^2.\]
If $r \leq K/L$ then we compute
\[\int_{B(0,r)} (K - L |v|)_+^2 \, dv\geq \frac{\alpha(m)r^m}{2^m(m+1)}K^2,\] 
which yields
\[\|u\|^2_{L^\infty(\M,Y)}\leq \frac{2^{m+1}(m+1)}{\alpha(m)r^m}\|u\|_{L^2(\M,Y)}^2,\]
which completes the proof.
\end{proof}

\section{Proof of Lemma~\ref{lem:discrepancy}}
\label{sec:app_proof}

We give here the proof of Lemma \ref{lem:discrepancy}.  A key tool in the proof is Bernstein's inequality \cite{boucheron2013concentration}, which we recall now for the reader's convenience. For $X_1,\dots,X_n$ \emph{i.i.d.}~with variance $\sigma^2 = \E[(X_i-\E[X_i])^2]$, if $|X_i|\leq M$ almost surely for all $i$ then Bernstein's inequality states that for any $\eps>0$
\begin{equation}\label{eq:bernstein}
\P\left( \left| \frac{1}{n}\sum_{i=1}^n X_i - \E[X_i] \right|> \eps\right)\leq 2\exp\left( -\frac{n\eps^2}{2\sigma^2 + 4M\eps/3} \right).
\end{equation}
\begin{proof}[Proof of Lemma \ref{lem:discrepancy}]
We note that it is sufficient to prove the result for $w\in H_L(X;Y)$ with $\int_\M w\rho \, dVol(x) = 0$. In this case, we have $w(x)=0$ for some $x\in \M$, and so $\|w\|_{L^\infty(X;Y)}\leq CL$. 

We first give the proof for $\M=X=[0,1]^m$.  We partition $X$ into hypercubes $B_1,\dots,B_N$ of side length $h>0$, where $N=h^{-m}$. Let $Z_j$ denote the number of $x_1,\dots,x_n$ falling in $B_j$. Then $Z_j$ is a Binomial random variable with parameters $n$ and $p_j = \int_{B_j}\rho \,dx \geq c h^m$. 
By the Bernstein inequality we have for each $j$ that
\begin{equation}\label{eq:bernapp}
\P\left( \left| \frac{1}{n}Z_j - \int_{B_j}\rho \,dx \right|> \eps\right)\leq 2\exp\left(-cnh^{-m}\eps^2 \right)
\end{equation}
provided $0 < \eps \leq h^m$. Therefore, we deduce
\begin{align*}
\frac{1}{n}\sum_{i=1}^n w(x_i) &\leq \frac{1}{n}\sum_{j=1}^N Z_j\max_{B_j} w\\
&\stackrel{\eqref{eq:bernapp}}{\leq}\sum_{j=1}^N \left( \int_{B_j}\rho \,dx + \eps \right)\max_{B_j} w\\
&\leq \sum_{j=1}^N \max_{B_j} w\int_{B_j}\rho \,dx + CLh^{-m}\eps\\
&\leq \sum_{j=1}^N (\min_{B_j} w + CLh)\int_{B_j}\rho \,dx + CLh^{-m}\eps\\
&\leq \sum_{j=1}^N \int_{B_j}w\rho \,dx + CLh^{-m}(h^{m+1} + \eps)\\
&=\int_X w \rho \, dx + CL(h + h^{-m}\eps)
\end{align*}
holds with probability at least $1-2h^{-m}\exp\left(-cnh^{-m}\eps^2 \right)$ for any $0 < \eps \leq h^{m}$. Choosing $\eps = h^{m+1}$ we have that
\begin{equation}\label{eq:upperbound}
\left|\frac{1}{n}\sum_{i=1}^n w(x_i) - \int_X w \rho \, dx\right|\leq CLh
\end{equation}
holds for all $u\in H_L(X;Y)$ with probability at least $1-2h^{-m}\exp\left(-cnh^{m+2} \right)$, provided $h \leq 1$. By selecting $nh^{m+2} = t\log(n)$
\[\sup_{w\in H_L(X;Y)}\left|\frac{1}{n}\sum_{i=1}^n w(x_i) - \int_\M w \rho \, dVol(x)\right|\leq CL\left( \frac{t\log(n)}{n} \right)^{\frac{1}{m+2}}\]
holds with probability at least $1-2t^{-\frac{m}{m+2}}n^{-(ct - 1)}$ for $t \leq n/\log(n)$. Since we have $\|w\|_{L^\infty(X;Y)}\leq CL$, the estimate
\[\sup_{w\in H_L(X;Y)}\left|\frac{1}{n}\sum_{i=1}^n w(x_i) - \int_\M w \rho \, dVol(x)\right|\leq CL,\]
trivially holds, and hence we can allow $t> n/\log(n)$ as well.

We sketch here how to prove the result on the manifold $\M$. We cover $\M$ with $k$ geodesic balls of radius $\eps>0$, denoted $B_\M(x_1,\eps),\dots,B_\M(x_k,\eps)$, and let $\varphi_1,\dots,\varphi_k$ be a partition of unity subordinate to this open covering of $\M$. For $\eps>0$ sufficiently small, the Riemannian exponential map $\exp_x:B(0,\eps)\subset T_x\M\to \M$ is a diffeomorphism between the ball $B(0,\eps) \subset T_x\M$ and the geodesic ball $B_\M(x,\eps)\subset \M$, where $T_x\M\cong \R^m$. Furthermore, the Jacobian of $\exp_x$ at $v\in B(0,r)\subset T_x\M$, denoted by $J_x(v)$, satisfies (by the Rauch Comparison Theorem)
\begin{equation}\label{eq:dist}
(1+C|v|^2)^{-1} \leq J_x(v) \leq 1 + C|v|^2.
\end{equation}
Therefore, we can run the argument above on the ball $B(0,r)\subset \R^m$ in the tangent space, lift the result to the geodesic ball $B_\M(x_i,\eps)$ via the Riemannian exponential map $\exp_x$, and apply the bound
\[\left|\frac{1}{n}\sum_{i=1}^n w(x_i) - \int_\M w \rho \, dVol(x)\right|\leq \sum_{j=1}^k\left|\frac{1}{n}\sum_{i=1}^n \varphi_j(x_i) w(x_i) - \int_{\M} \varphi_j w \rho \, dVol(x)\right|\]
to complete the proof.
\end{proof}


\end{document}